





\documentclass[sigconf,authorversion,nonacm]{aamas}


\usepackage{balance} 



\setcopyright{ifaamas}
\acmConference[AAMAS '24]{Proc.\@ of the 23rd International Conference
on Autonomous Agents and Multiagent Systems (AAMAS 2024)}{May 6 -- 10, 2024}
{Auckland, New Zealand}{N.~Alechina, V.~Dignum, M.~Dastani, J.S.~Sichman (eds.)}
\copyrightyear{2024}
\acmYear{2024}
\acmDOI{}
\acmPrice{}
\acmISBN{}



\acmSubmissionID{252}



\usepackage{subcaption}
\usepackage[inline]{enumitem}
\usepackage{algorithm}
\usepackage[noend]{algpseudocode}


\newtheorem{theorem}{Theorem}

\newtheorem{proposition}{Proposition}

\DeclareMathOperator*{\argmax}{arg\,max}

\newcommand{\bcc}[1]{\left\{{#1}\right\}}
\newcommand{\brr}[1]{\left({#1}\right)}
\newcommand{\bss}[1]{\left[{#1}\right]}
\newcommand{\ipp}[2]{\left\langle{#1},{#2}\right\rangle}
\newcommand\norm[1]{\left\lVert#1\right\rVert}
\newcommand{\abs}[1]{\left\vert#1\right\vert}

\newcommand{\Expect}[1]{\mathbb{E}\bss{{#1}}}
\newcommand{\Expectover}[2]{\mathbb{E}_{#1}\!\left[#2\right]}

\newcommand{\envfourroom}{\textsc{Room}}

\newcommand{\envlinekey}{\textsc{LineK}}

\newcommand{\orig}{\textsc{Orig}}
\newcommand{\EXPRD}{\textsc{ExpRD}}
\newcommand{\Invariance}{\textsc{Invar}}
\newcommand{\AlgOurs}{\textsc{ExpAdaRD}}

\allowdisplaybreaks


\title[Informativeness of Reward Functions in Reinforcement Learning]{Informativeness of Reward Functions in Reinforcement Learning}


\author{Rati Devidze}
\affiliation{
  \institution{MPI-SWS}
  \city{Saarbr{\"u}cken}
  \country{Germany}}
\email{rdevidze@mpi-sws.org}

\author{Parameswaran Kamalaruban}
\affiliation{
  \institution{The Alan Turing Institute}
  \city{London}
  \country{United Kingdom}}
\email{pkamalaruban@gmail.com}

\author{Adish Singla}
\affiliation{
  \institution{MPI-SWS}
  \city{Saarbr{\"u}cken}
  \country{Germany}}
\email{adishs@mpi-sws.org}


\begin{abstract}
Reward functions are central in specifying the task we want a reinforcement learning agent to perform. Given a task and desired optimal behavior, we study the problem of designing informative reward functions so that the designed rewards speed up the agent's convergence. In particular, we consider expert-driven reward design settings where an expert or teacher seeks to provide informative and interpretable rewards to a learning agent. Existing works have considered several different reward design formulations; however, the key challenge is formulating a reward informativeness criterion that adapts w.r.t. the agent's current policy and can be optimized under specified structural constraints to obtain interpretable rewards. In this paper, we propose a novel reward informativeness criterion, a quantitative measure that captures how the agent's current policy will improve if it receives rewards from a specific reward function. We theoretically showcase the utility of the proposed informativeness criterion for adaptively designing rewards for an agent. Experimental results on two navigation tasks demonstrate the effectiveness of our adaptive reward informativeness criterion.
\end{abstract}



\keywords{Reinforcement Learning; Reward Design; Reward Informativeness}


         
\newcommand{\BibTeX}{\rm B\kern-.05em{\sc i\kern-.025em b}\kern-.08em\TeX}


\makeatletter
\gdef\@copyrightpermission{
	\begin{minipage}{0.3\columnwidth}
		\href{https://creativecommons.org/licenses/by/4.0/}{\includegraphics[width=0.90\textwidth]{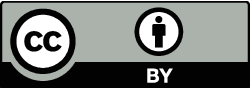}}
	\end{minipage}\hfill
	\begin{minipage}{0.7\columnwidth}
		\href{https://creativecommons.org/licenses/by/4.0/}{This work is licensed under a Creative Commons Attribution International 4.0 License.}
	\end{minipage}
	\vspace{5pt}
}
\makeatother

\begin{document}


\pagestyle{fancy}
\fancyhead{}


\maketitle 

\section{Introduction} \label{sec:intro}

Reward functions play a central role during the learning/training process of a reinforcement learning (RL) agent. Given a task the agent is expected to perform, many different reward functions exist under which an optimal policy has the same performance on the task. This freedom in choosing a reward function for the task, in turn, leads to the fundamental question of designing appropriate rewards for the RL agent that match certain desired criteria~\cite{DBLP:conf/icml/Mataric94,DBLP:conf/icml/RandlovA98,DBLP:conf/icml/NgHR99}. In this paper, we study the problem of designing \emph{informative} reward functions so that the designed rewards speed up the agent's convergence~\cite{DBLP:conf/icml/Mataric94,DBLP:conf/icml/RandlovA98,DBLP:conf/icml/NgHR99,DBLP:conf/icml/LaudD03,DBLP:conf/nips/DaiW19,DBLP:conf/nips/Arjona-MedinaGW19}.

More concretely, we focus on expert-driven reward design settings where an expert or teacher seeks to provide informative rewards to a learning agent~\cite{DBLP:conf/icml/Mataric94,DBLP:conf/icml/NgHR99,DBLP:conf/aaai/ZhangP08,DBLP:conf/sigecom/ZhangPC09,DBLP:conf/ijcai/GoyalNM19,DBLP:conf/nips/MaZSZ19,DBLP:conf/icml/RakhshaRD0S20,DBLP:journals/corr/abs-2011-10824,jiang2021temporal_AAAI,devidze2021explicable}. In  expert-driven reward design settings, the designed reward functions should also satisfy certain \emph{structural constraints} apart from being informative, e.g., to ensure interpretability of reward signals or to match required reward specifications~\cite{grzes2008plan,DBLP:conf/atal/0002CP19,camacho2017decision,DBLP:conf/nips/JothimuruganAB19,jiang2021temporal_AAAI,devidze2021explicable,DBLP:journals/jair/IcarteKVM22,BewleyL22}. For instance, informativeness and interpretability become crucial in settings where rewards are designed for human learners who are learning to perform sequential tasks in pedagogical applications such as educational games~\cite{DBLP:conf/chi/ORourkeHBDP14} and open-ended problem solving domains~\cite{maloney2008programming}. Analogously, informativeness and structural constraints become crucial in settings where rewards are designed for complex compositional tasks in the robotics domain that involve reward specifications in terms of automata or subgoals~\cite{jiang2021temporal_AAAI,DBLP:journals/jair/IcarteKVM22}. To this end, an important research question is: \emph{How to formulate reward informativeness criterion
that can be optimized under specified structural constraints?}

\looseness-1Existing works have considered different reward design formulations; however, they have limitations in appropriately incorporating informativeness and structural properties. On the one hand, potential-based reward shaping (PBRS) is a well-studied family of reward design techniques~\cite{DBLP:conf/icml/NgHR99,DBLP:journals/jair/Wiewiora03,DBLP:conf/aaai/AsmuthLZ08,grzes2008plan,DBLP:conf/aamas/DevlinK12,DBLP:conf/atal/Grzes17,DBLP:conf/atal/0002CP19,DBLP:conf/ijcai/GoyalNM19,jiang2021temporal_AAAI}. While PBRS techniques enable designing informative rewards via utilizing informative potential functions (e.g., near-optimal value function for the task), the resulting reward functions do not adhere to specific structural constraints. On the other hand, optimization-based reward design techniques is another popular family of techniques~\cite{DBLP:conf/aaai/ZhangP08,DBLP:conf/sigecom/ZhangPC09,DBLP:conf/nips/MaZSZ19,DBLP:conf/icml/RakhshaRD0S20,DBLP:journals/corr/abs-2011-10824,devidze2021explicable}. While optimization-based techniques enable enforcing specific structural constraints, there is a lack of suitable reward informativeness criterion that is amenable to optimization as part of these techniques. In this family of techniques, a recent work \cite{devidze2021explicable} introduced a reward informativeness criterion suitable for optimization under sparseness structure; however, their informativeness criterion doesn't account for the agent's current policy, making the reward design process agnostic to the agent's learning progress.

In this paper, we present a general framework, \AlgOurs{}, for \emph{expert-driven explicable and adaptive reward design}. \AlgOurs{} utilizes a novel reward informativeness criterion, a quantitative measure that captures how the agent's current policy will improve if it receives rewards from a specific reward function. Crucially, the informativeness criterion adapts w.r.t. the agent’s current policy and can be optimized under specified structural constraints to obtain interpretable rewards.  Our main results and contributions are:
\begin{enumerate}[label={\Roman*.},leftmargin=13pt]
\item We introduce a reward informativeness criterion formulated within bi-level optimization. By analyzing it for a specific learning algorithm, we derive a novel informativeness criterion that is amenable to the reward optimization process (Sections~\ref{subsec:info-criterion-proposal}~and~\ref{subsec:info-criterion-intuitive}). 
\item \looseness-1We theoretically showcase the utility of our informativeness criterion in adaptively designing rewards by analyzing the convergence speed up of an agent in a simplified setting (Section~\ref{subsec:info-criterion-expert}).
\item We empirically demonstrate the effectiveness of our reward informativeness criterion for designing explicable and adaptive reward functions in two navigation environments. (Section~\ref{sec:evaluation}).\footnote{Github: \url{https://github.com/machine-teaching-group/aamas2024-informativeness-of-reward-functions}.\label{footnote:gitcode}}
\end{enumerate}

\subsection{Related Work}
\label{sec:related_work}

\looseness-1\paragraph{\textbf{Expert-driven reward design.}} As previously discussed, well-studied families of expert-driven reward design techniques include potential-based reward shaping (PBRS)~\cite{DBLP:conf/icml/NgHR99,DBLP:journals/jair/Wiewiora03,DBLP:conf/aaai/AsmuthLZ08,grzes2008plan,DBLP:conf/aamas/DevlinK12,DBLP:conf/atal/Grzes17,DBLP:conf/atal/0002CP19,DBLP:conf/ijcai/GoyalNM19,jiang2021temporal_AAAI}, optimization-based techniques~\cite{DBLP:conf/aaai/ZhangP08,DBLP:conf/sigecom/ZhangPC09,DBLP:conf/nips/MaZSZ19,DBLP:conf/icml/RakhshaRD0S20,DBLP:journals/corr/abs-2011-10824,devidze2021explicable}, and reward shaping with expert demonstrations or feedback~\cite{brys2015reinforcement,de2020imitation,daniel2014active,xiao2020fresh}. Our reward design framework, \AlgOurs{}, also uses an optimization-based design process. The key issue with existing optimization-based techniques is a lack of suitable reward informativeness criterion. A recent work~\cite{devidze2021explicable} introduced an expert-driven explicable reward design framework (\EXPRD{}) that optimizes an informativeness criterion under sparseness structure. However, their informativeness criterion doesn't account for the agent's current policy, making the reward design process agnostic to the agent's learning progress. In contrast, we propose an adaptive informativeness criterion enabling it to provide more informative reward signals. Technically, our proposed reward informativeness criterion is quite different from that proposed in ~\cite{devidze2021explicable} and is derived based on analyzing meta-gradients within bi-level optimization formulation. 

\looseness-1\paragraph{\textbf{Learner-driven reward design.}} Learner-driven reward design techniques involve an agent designing its own rewards throughout the training process to accelerate convergence~\cite{DBLP:books/sp/13/Barto13,DBLP:conf/nips/KulkarniNST16,DBLP:conf/nips/TrottZXS19,DBLP:conf/nips/Arjona-MedinaGW19,DBLP:conf/ijcai/FerretMGP20,sorg2010online_gradient_asc,zheng2018learning,memarian2021self,devidze2022exploration}. These learner-driven techniques employ various strategies, including designing intrinsic rewards based on exploration bonuses~\cite{DBLP:books/sp/13/Barto13,DBLP:conf/nips/KulkarniNST16,DBLP:conf/nips/ZhangMS20}, crafting rewards using domain-specific knowledge~\cite{DBLP:conf/nips/TrottZXS19}, using credit assignment to create intermediate rewards~\cite{DBLP:conf/nips/Arjona-MedinaGW19,DBLP:conf/ijcai/FerretMGP20}, and designing parametric reward functions by iteratively updating reward parameters and optimizing the agent's policy based on learned rewards~\cite{sorg2010online_gradient_asc,zheng2018learning,memarian2021self,devidze2022exploration}. While these learner-driven techniques are typically designing adaptive and online reward functions, these techniques do not emphasize the formulation of an informativeness criterion explicitly. In our work, we draw on insights from meta-gradient derivations presented in ~\cite{sorg2010online_gradient_asc,zheng2018learning,memarian2021self,devidze2022exploration} to develop an adaptive informativeness criterion tailored for the expert-driven reward design settings.
\section{Preliminaries}
\label{sec:formal-setup}
\paragraph{\textbf{Environment.}} An environment is defined as a Markov Decision Process (MDP) denoted by $M := \brr{\mathcal{S},\mathcal{A},T,P_0,\gamma,R}$, where $\mathcal{S}$ and $\mathcal{A}$ represent the state and action spaces respectively. The state transition dynamics are captured by $T: \mathcal{S} \times \mathcal{S} \times \mathcal{A} \rightarrow \bss{0,1}$, where $T(s' \mid s,a)$ denotes the probability of transitioning to state $s'$ by taking action $a$ from state $s$. The discounting factor is denoted by $\gamma$, and $P_0$ represents the initial state distribution. The reward function is given by $R: \mathcal{S} \times \mathcal{A} \rightarrow \mathbb{R}$.

\paragraph{\textbf{Policy and performance.}} We denote a stochastic policy $\pi: \mathcal{S} \rightarrow \Delta \brr{\mathcal{A}}$ as a mapping from a state to a probability distribution over actions, and a deterministic policy $\pi: \mathcal{S} \rightarrow \mathcal{A}$ as a mapping from a state to an action. For any trajectory $\xi = \bcc{(s_t, a_t)}_{t = 0,1,\dots,H}$, we define its cumulative return with respect to reward function $R$ as $J(\xi, R) := \sum_{t=0}^H \gamma^t \cdot R(s_t, a_t)$. The expected cumulative return (value) of a policy $\pi$ with respect to $R$ is then defined as $J(\pi, R) := \Expect{J(\xi, R) | P_0, T, \pi}$, where $s_0\!\sim\!P_0(\cdot)$, $a_t\!\sim\!\pi (\cdot | s_t)$, and $s_{t+1}\!\sim\!T(\cdot | s_t, a_t)$. A learning agent (learner) in our setting seeks to find a policy that has maximum value with respect to $R$, i.e., $\max_\pi J(\pi, R)$. We denote the state occupancy measure of a policy $\pi$ by $d^\pi$. Furthermore, we define the state value function $V^\pi_R$ and the action value function $Q^\pi_R$ of a policy $\pi$ with respect to $R$ as follows, respectively: $V^\pi_R (s) = \mathbb{E}[J(\xi, R) | s_0 =s, T, \pi]$ and $Q^\pi_R (s,a) = \mathbb{E}[J(\xi, R) | s_0 =s, a_0 = a, T, \pi]$. The optimal value functions are given by $V^*_R (s) = \sup_\pi V^\pi_R (s)$ and $Q^*_R (s,a) = \sup_\pi Q^\pi_R (s,a)$.  

\section{Expert-driven Explicable and Adaptive Reward Design}
\label{sec:expadard-formal}

In this section, we present a general framework for expert-driven reward design, \AlgOurs{}, as outlined in Algorithm~\ref{alg:expert-driven-adaptive-reward-design}. In our framework, an expert or teacher seeks to provide informative and interpretable rewards to a learning agent. In each round $k$, we address a reward design problem involving the following key elements: an underlying reward function $\overline{R}$, a target policy $\pi^T$ (e.g., a near-optimal policy w.r.t. $\overline{R}$), a learner's policy $\pi^L_{k-1}$, and a learning algorithm $L$. The main objective of this reward design problem is to craft a new reward function $R_k$ under constraints $\mathcal{R}$ such that $R_k$ provides informative learning signals when employed to update the policy $\pi^L_{k-1}$ using the algorithm $L$. To quantify this objective, it is essential to define a reward informativeness criterion, $I_L(R \mid \overline{R}, \pi^T, \pi^L_{k-1})$, that adapts w.r.t. the agent's current policy and can be optimized under specified structural constraints to obtain interpretable rewards.
Given this informativeness criterion $I_L$ (to be developed in Section~\ref{sec:info-criterion}), the reward design problem can be formulated as follows:
\begin{equation}
\max_{R \in \mathcal{R}} I_L(R \mid \overline{R}, \pi^T, \pi^L_{k-1}) .
\label{eq:reward-design-problem-orig}
\end{equation}
\looseness-1Here, the set $\mathcal{R}$ encompasses additional constraints tailored to the application-specific requirements, including (i) policy invariance constraints $\mathcal{R}_\textnormal{inv}$ to guarantee that the designed reward function induces the desired target policy and (ii) structural constraints $\mathcal{R}_\textnormal{str}$ to obtain interpretable rewards, as further discussed below.

\begin{algorithm}[t!]
    \caption{A General Framework for Expert-driven Explicable and Adaptive Reward Design (\AlgOurs{})}
    \begin{algorithmic}[1]
        \State \textbf{Input:} \looseness-1MDP $M := \big( \mathcal{S},\mathcal{A},T,P_0,\gamma,\overline{R} \big)$, target policy $\pi^T$, learning algorithm $L$, informativeness criterion $I_L$, reward constraint set $\mathcal{R}$
        \State \textbf{Initialize:} learner's initial policy $\pi_{0}^L$ 
        \For{$k = 1,2,\dots, K$}
            \State Expert/teacher updates the reward function: \qquad \qquad \qquad $R_k \gets \argmax_{R \in \mathcal{R}} I_L(R \mid \overline{R}, \pi^T, \pi^L_{k-1})$
            \State Learner updates the policy: $\pi^L_k \gets L(\pi^L_{k-1}, R_k)$ 
        \EndFor{}
        \State \textbf{Output:} learner's policy $\pi^L_K$
    \end{algorithmic}
    \label{alg:expert-driven-adaptive-reward-design}
\end{algorithm}

\paragraph{\textbf{Invariance constraints.}} Let $\overline{\Pi}^* := \{\pi: \mathcal{S} \rightarrow \mathcal{A} \text{ s.t. } V^{\pi}_{\overline{R}} (s) = V^*_{\overline{R}} (s), \forall s \in \mathcal{S}\}$ denote the set of all deterministic optimal policies under $\overline{R}$. Next, we define $\mathcal{R}_\textnormal{inv}$ as a set of invariant reward functions, where each $R \in \mathcal{R}_\textnormal{inv}$ satisfies the following conditions~\cite{DBLP:conf/icml/NgHR99,devidze2021explicable}: 
\begin{align*}
Q^{\pi^T}_{R}(s, a) - V^{\pi^T}_{R}(s) ~\leq~& Q^{\pi^T}_{\overline{R}}(s, a) - V^{\pi^T}_{\overline{R}}(s), \quad \forall a \in \mathcal{A}, s \in \mathcal{S}.
\end{align*}
When $\pi^T$ is an optimal policy under $\overline{R}$ (i.e., $\pi^T \in \overline{\Pi}^*$), these conditions guarantee the following: (i) $\pi^T$ is an optimal policy under $R$; (ii) any optimal policy induced by $R$ is also an optimal policy under $\overline{R}$; (iii) reward function $\overline{R} \in \mathcal{R}_\textnormal{inv}$, i.e., $\mathcal{R}_\textnormal{inv}$ is non-empty.\footnote{\looseness-1We can guarantee point (i) of $\pi^T$ being an optimal policy under $R$ by replacing the right-hand side with $-\epsilon$ for $\epsilon > 0$; however, this would not guarantee (ii) and (iii).}

\paragraph{\textbf{Structural constraints.}} We consider structural constraints as a way to obtain interpretable rewards (e.g., sparsity or tree-structured rewards) and satisfy application-specific requirements (e.g., bounded rewards). We denote the set of reward functions conforming to specified structural constraints as $\mathcal{R}_\textnormal{str}$~\cite{grzes2008plan,camacho2017decision,DBLP:conf/atal/0002CP19,DBLP:conf/nips/JothimuruganAB19,DBLP:journals/jair/IcarteKVM22,jiang2021temporal_AAAI,devidze2021explicable,BewleyL22}. We implement these constraints via a set of parameterized reward functions, denoted as $\mathcal{R}_\textnormal{str} = \{R_\phi: \mathcal{S} \times \mathcal{A} \to \mathbb{R} \text{ where } \phi \in \mathbb{R}^d\}$. For example, given a feature representation $f: \mathcal{S} \times \mathcal{A} \to \{0,1\}^d$, we employ $R_\phi(s,a) = \ipp{\phi}{f(s,a)}$ in our experimental evaluation (Section~\ref{sec:evaluation}). In particular, we will use different feature representations to specify constraints induced by coarse-grained state abstraction~\cite{DBLP:journals/corr/abs-2006-13160} and tree structure~\cite{BewleyL22}. Furthermore, it is possible to impose additional constraints on $\phi$, such as bounding its $\ell_\infty$ norm by $R_\mathrm{max}$ or requiring that its support $\mathrm{supp}(\phi)$, defined as $\{i: i \in [d], \phi_{i} \neq 0\}$, matches a predefined set $\mathcal{Z} \subseteq [d]$~\cite{,devidze2021explicable}. 

\section{Informativeness Criterion for Reward Design}
\label{sec:info-criterion}

In this section, we focus on developing a reward informativeness criterion that can be optimized for the reward design formulation in Eq.~\eqref{eq:reward-design-problem-orig}. We first introduce an informativeness criterion formulated within a bi-level optimization framework and then propose an intuitive informativeness criterion that can be generally applied to various learning algorithms. 

\paragraph{\textbf{Notation.}} In the subscript of the expectations $\mathbb{E}$, let $\pi(a|s)$ mean $a \sim \pi(\cdot | s)$, $\mu^\pi(s, a)$ mean $s \sim d^\pi, a \sim \pi(\cdot | s)$, and $\mu^\pi(s)$ mean $s \sim d^\pi$. Further, we use shorthand notation $\mu^\pi_{s,a}$ and $\mu^\pi_s$ to refer $\mu^{\pi}(s,a)$ and $\mu^{\pi}(s)$, respectively.

\subsection{Bi-Level Formulation for Reward Informativeness $I_L(R)$}
\label{subsec:info-criterion-proposal}

We consider parametric reward functions of the form $R_\phi: \mathcal{S} \times \mathcal{A} \to \mathbb{R}$, where $\phi \in \mathbb{R}^d$, and parametric policies of the form $\pi_\theta: \mathcal{S} \rightarrow \Delta \brr{\mathcal{A}}$, where $\theta \in \mathbb{R}^n$. Let $\overline{R}$ be the underlying reward function, and let $\pi^T$ be a target policy (e.g., a near-optimal policy w.r.t. $\overline{R}$). We measure the performance of any policy $\pi_\theta$ w.r.t. $\overline{R}$ and $\pi^T$ using the following performance metric: $J(\pi_\theta; \overline{R}, \pi^T) = \Expectover{\mu^{\pi^T}_s}{\Expectover{\pi_\theta(a|s)}{A^{\pi^T}_{\overline{R}}(s,a)}}$, where ${A}^{\pi^T}_{\overline{R}} (s,a) = {Q}^{\pi^T}_{\overline{R}} (s,a) - {V}^{\pi^T}_{\overline{R}} (s)$ is the advantage function of policy $\pi^T$ w.r.t. $\overline{R}$. Given a current policy $\pi_\theta$ and a reward function $R$, the learner updates the policy parameter using a learning algorithm $L$ as follows: $\theta_\textnormal{new} \gets L(\theta, R)$.

To evaluate the informativeness of a reward function $R_\phi$ in guiding the convergence of the learner's policy $\pi^L := \pi_{\theta^L}$ towards the target policy $\pi^T$, we define the following informativeness criterion:
\begin{align}
I_L(R_\phi \mid \overline{R}, \pi^T, \pi^L) ~:=~& J(\pi_{\theta^L_\textnormal{new}(\phi)}; \overline{R}, \pi^T) \nonumber\\
&\text{where} \quad \theta^L_\textnormal{new}(\phi) \gets L(\theta^L, R_\phi) .
\label{eq:intuitive-IR-bi-level}
\end{align}
\looseness-1The above criterion measures the performance of the resulting policy after the learner updates $\pi^L$ using the reward function $R_\phi$. However, this criterion relies on having access to the learning algorithm $L$ and evaluating this criterion requires potentially expensive policy updates using $L$. In the subsequent analysis, we further examine this criterion to develop an intuitive alternative that is independent of any specific learning algorithm and does not require any policy updates for its evaluation.  


\looseness-1\paragraph{\textbf{Analysis for a specific learning algorithm $L$.}} Here, we present an analysis of the informativeness criterion defined above, considering a simple learning algorithm $L$. Specifically, we consider an algorithm $L$ that utilizes parametric policies $\bcc{\pi_\theta: \theta \in \mathbb{R}^n}$ and performs single-step vanilla policy gradient updates using $Q$-values computed using $h$-depth planning~\cite{sorg2010online_gradient_asc,zheng2018learning,devidze2022exploration}. We update the policy parameter $\theta$ by employing a reward function $R$ in the following manner:
\begin{align*}
L(\theta, R) ~:&=~ \theta + \alpha \cdot \big[ \nabla_\theta J(\pi_\theta, R) \big]_{\theta}\\ &= \theta + \alpha \cdot \Expectover{\mu^{\pi_\theta}_{s,a}}{\big[\nabla_\theta \log \pi_\theta (a|s)\big]_{\theta} {Q}^{\pi_{\theta}}_{R,h}(s,a)} ,
\end{align*}
\looseness-1where ${Q}^{\pi_{\theta}}_{R,h}(s,a) = \Expect{\sum_{t=0}^h \gamma^t R(s_t, a_t) \big| s_0=s, a_0=a, T, \pi_{\theta}}$ is the $h$-depth $Q$-value with respect to $R$, and $\alpha$ is the learning rate. Furthermore, we assume that $L$ uses a tabular representation, where $\theta \in \mathbb{R}^{\abs{\mathcal{S}} \cdot \abs{\mathcal{A}}}$, and a softmax policy parameterization given by $\pi_\theta(a|s) := \frac{\exp(\theta(s, a))}{\sum_{b} \exp(\theta(s, b))}, \forall s \in \mathcal{S}, a \in \mathcal{A}$. For this $L$, the following proposition provides an intuitive form of the gradient of $I_L$ in Eq.~\eqref{eq:intuitive-IR-bi-level}.

\begin{proposition}
\label{prop:intuitive-grad}
The gradient of the informativeness criterion in Eq.~\eqref{eq:intuitive-IR-bi-level} for the simplified learning algorithm $L$ with $h$-depth planning described above takes the following form:
\begin{align*}
& \nabla_\phi I_L(R_\phi \mid \overline{R}, \pi^T, \pi^L) ~\approx~ \\
& \alpha \cdot \nabla_\phi \mathbb{E}_{\mu^{\pi^L}_{s,a}}\bigg[ \mu^{\pi^T}_s \cdot \pi^L(a|s) \cdot \big(A^{\pi^T}_{\overline{R}}(s,a) - {A}^{\pi^T}_{\overline{R}}(s, \pi^L(s))\big) \cdot {A^{\pi^L}_{R_\phi,h} (s,a)} \bigg] ,
\end{align*} 
where ${A}^{\pi^T}_{\overline{R}}(s, \pi^L(s)) = \Expectover{\pi^L(a'|s)}{{A}^{\pi^T}_{\overline{R}}(s, a')}$, and ${A}^{\pi^L}_{R_\phi,h} (s,a) = {Q}^{\pi^L}_{R_\phi,h} (s,a) - {V}^{\pi^L}_{R_\phi,h} (s)$.
\end{proposition}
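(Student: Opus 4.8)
The plan is to treat $I_L(R_\phi) = J(\pi_{\theta^L_\textnormal{new}(\phi)}; \overline{R}, \pi^T)$ as a composition and differentiate by the chain rule, since $\phi$ enters only through the one-step update $\theta^L_\textnormal{new}(\phi) = \theta^L + \alpha\, g(\theta^L, \phi)$, where $g(\theta,\phi) := \Expectover{\mu^{\pi_\theta}_{s,a}}{[\nabla_\theta \log \pi_\theta(a|s)]_\theta\, {Q}^{\pi_\theta}_{R_\phi,h}(s,a)}$ is the planning-based policy gradient. Writing the meta-gradient as $\nabla_\phi I_L = [\nabla_\theta J]_{\theta^L_\textnormal{new}}^\top \, \nabla_\phi \theta^L_\textnormal{new}$ and using that the pre-update $\theta^L$ is fixed w.r.t.\ $\phi$ (hence $\nabla_\phi \theta^L_\textnormal{new} = \alpha\, \nabla_\phi g$), I would reduce the proof to computing the two factors $[\nabla_\theta J]$ and $g$ separately and recombining. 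The lone approximation $\approx$ enters exactly here: I replace $[\nabla_\theta J]_{\theta^L_\textnormal{new}}$ by $[\nabla_\theta J]_{\theta^L}$, a first-order (in $\alpha$) approximation since $\theta^L_\textnormal{new} - \theta^L = O(\alpha)$; this is what decouples the outer gradient from $\phi$.

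First I would compute $g$ componentwise. Exploiting the tabular softmax identity $\partial_{\theta(s',a')} \log \pi_\theta(a|s) = \mathbf{1}[s=s']\,(\mathbf{1}[a=a'] - \pi_\theta(a'|s))$ together with $\mu^{\pi^L}_{s,a} = d^{\pi^L}(s)\,\pi^L(a|s)$, the summation over $a$ contracts the $Q$-values against the softmax baseline, yielding $g(s,a) = \mu^{\pi^L}_{s,a}\, {A}^{\pi^L}_{R_\phi,h}(s,a)$; here $\phi$ enters only through ${Q}^{\pi^L}_{R_\phi,h}$ (the score weight is frozen at $\theta^L$), and the subtracted baseline ${V}^{\pi^L}_{R_\phi,h}$ is precisely what turns the $Q$-value into the advantage.

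Next I would compute $\nabla_\theta J$ at $\theta^L$ using the same softmax identity applied to $J(\pi_\theta;\overline{R},\pi^T) = \Expectover{\mu^{\pi^T}_s}{\sum_a \pi_\theta(a|s)\, A^{\pi^T}_{\overline{R}}(s,a)}$; the baseline subtraction now produces the advantage \emph{difference}, giving $[\nabla_\theta J]_{\theta^L}(s,a) = \mu^{\pi^T}_s\, \pi^L(a|s)\,\big(A^{\pi^T}_{\overline{R}}(s,a) - A^{\pi^T}_{\overline{R}}(s,\pi^L(s))\big)$, using the definition $A^{\pi^T}_{\overline{R}}(s,\pi^L(s)) = \Expectover{\pi^L(a'|s)}{A^{\pi^T}_{\overline{R}}(s,a')}$. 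Since this factor is independent of $\phi$, after plugging both factors into the contracted sum $\alpha \sum_{s,a} [\nabla_\theta J]_{\theta^L}(s,a)\, \nabla_\phi g(s,a)$ I can commute $\nabla_\phi$ with the finite sum; as the remaining $\phi$-dependence lives entirely in ${A}^{\pi^L}_{R_\phi,h}$ inside $g$, and the $\mu^{\pi^L}_{s,a}$ factor of $g$ recombines the sum into an expectation, the whole thing collapses to $\alpha\, \nabla_\phi \Expectover{\mu^{\pi^L}_{s,a}}{\mu^{\pi^T}_s\, \pi^L(a|s)\,(A^{\pi^T}_{\overline{R}}(s,a) - A^{\pi^T}_{\overline{R}}(s,\pi^L(s)))\, {A}^{\pi^L}_{R_\phi,h}(s,a)}$, which is the claimed form.

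I expect the main obstacle to be bookkeeping rather than any deep step: carefully tracking the two index sets (the policy-parameter coordinate $(s',a')$ against which we differentiate, versus the expectation variables $(s,a)$) so that the softmax baseline cancellations correctly yield the advantage in $g$ and the advantage difference in $\nabla_\theta J$, and justifying that the frozen-parameter substitution $[\nabla_\theta J]_{\theta^L_\textnormal{new}} \approx [\nabla_\theta J]_{\theta^L}$ is the sole source of the $\approx$. A secondary point to verify is that no $\phi$-dependence is hidden in the score function $\nabla_\theta \log \pi_\theta$ or the occupancy $\mu^{\pi^L}$, both of which are evaluated at the fixed pre-update $\theta^L$, so that pulling $\nabla_\phi$ through the sum and identifying the expectation over $\mu^{\pi^L}_{s,a}$ is legitimate.
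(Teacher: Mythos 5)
Your proposal is correct and follows essentially the same route as the paper's proof: the same chain-rule decomposition of the meta-gradient, the same approximation of freezing the outer gradient at $\theta^L$ (your first-order-in-$\alpha$ justification is exactly what the paper's smoothness condition encodes), and the same softmax-score baseline cancellations that turn $Q^{\pi^L}_{R_\phi,h}$ into the advantage in one factor and produce the advantage difference in the other, followed by the same contraction into an expectation over $\mu^{\pi^L}_{s,a}$. The only differences are notational (componentwise partial derivatives versus the paper's indicator-vector matrix products) and the order in which the advantage appears relative to $\nabla_\phi$, neither of which changes the argument.
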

\begin{proof}
We discuss key proof steps here and provide a more detailed proof in appendices of the paper. For the simple learning algorithm $L$ described above, we can write the derivative of the informativeness criterion in Eq.~\eqref{eq:intuitive-IR-bi-level} as follows:
\begin{align*}
\bss{\nabla_\phi I_L(R_\phi \mid \overline{R}, \pi^T, \pi^L)}_{\phi}&  \\
~\stackrel{(a)}{=}~& \bss{\nabla_\phi \theta^L_\textnormal{new} (\phi) \cdot \nabla_{\theta^L_\textnormal{new} (\phi)} J(\pi_{\theta^L_\textnormal{new}(\phi)}; \overline{R}, \pi^T)}_{\phi} \\ 
~\stackrel{(b)}{\approx}~& \bss{\nabla_\phi \theta^L_\textnormal{new} (\phi)}_{\phi} \cdot \bss{\nabla_{\theta} J(\pi_{\theta}; \overline{R}, \pi^T)}_{\theta^L} , 
\end{align*}
where the equality in $(a)$ is due to chain rule, and the approximation in $(b)$ assumes a smoothness condition of $\Big\lVert \bss{\nabla_{\theta} J(\pi_{\theta}; \overline{R}, \pi^T)}_{\theta^L_\textnormal{new}(\phi)} - \bss{\nabla_{\theta} J(\pi_{\theta}; \overline{R}, \pi^T)}_{\theta^L} \Big\rVert_2 \leq c \cdot \norm{\theta^L_\textnormal{new}(\phi) - \theta^L}_2$ for some $c > 0$. For the $L$ described above, we can obtain intuitive forms of the terms $\bss{\nabla_\phi \theta^L_\textnormal{new} (\phi)}_{\phi}$ and $\bss{\nabla_{\theta} J(\pi_{\theta}; \overline{R}, \pi^T)}_{\theta^L}$. For any $s \in \mathcal{S}, a \in \mathcal{A}$, let $\mathbf{1}_{s,a} \in \mathbb{R}^{\abs{\mathcal{S}} \cdot \abs{\mathcal{A}}}$ denote a vector with $1$ in the $(s,a)$-th entry and $0$ elsewhere. By using the meta-gradient derivations presented in~\cite{andrychowicz2016learning,santoro2016meta,nichol2018first}, we simplify the first term as follows:
\begin{align*}
\bss{\nabla_\phi \theta^L_\textnormal{new} (\phi)}_{\phi} ~=~& \alpha \cdot \Expectover{\mu_{s}^{\pi^L}}{\sum_{a}{\pi^L(a|s) \cdot \big[ \nabla_\phi {A}^{\pi^L}_{R_\phi,h} (s,a) \big]_{\phi} \cdot \mathbf{1}_{s,a}^\top}} .  
\end{align*}
Then, we simplify the second term as follows:
\begin{align*}
& \bss{\nabla_{\theta} J(\pi_{\theta}; \overline{R}, \pi^T)}_{\theta^L} \\
&\quad \quad \quad \quad ~=~ \Expectover{\mu_{s}^{\pi^T}}{\sum_{a}{\pi^L(a|s) \cdot \brr{{A}^{\pi^T}_{\overline{R}}(s,a) - {A}^{\pi^T}_{\overline{R}}(s, \pi^L(s))} \cdot \mathbf{1}_{s,a}}} .
\end{align*}
Taking the matrix product of two terms completes the proof.
\end{proof}

\subsection{Intuitive Formulation for Reward Informativeness $I_h(R)$}
\label{subsec:info-criterion-intuitive}

Based on Proposition~\ref{prop:intuitive-grad}, for the simple learning algorithm $L$ discussed in Section~\ref{subsec:info-criterion-proposal}, the informativeness criterion in Eq.~\eqref{eq:intuitive-IR-bi-level} can be written as follows: 
\begin{align*}
& I_L(R_\phi \mid \overline{R}, \pi^T, \pi^L) ~\approx~ \alpha \cdot \mathbb{E}_{\mu^{\pi^L}_{s,a}}\bigg[ \mu^{\pi^T}_s \cdot \pi^L(a|s) \\
& \qquad \qquad \qquad \qquad \cdot \big(A^{\pi^T}_{\overline{R}}(s,a) - {A}^{\pi^T}_{\overline{R}}(s, \pi^L(s))\big) \cdot {A^{\pi^L}_{R_\phi,h} (s,a)} \bigg] + \kappa ,    
\end{align*}
for some $\kappa \in \mathbb{R}$. By dropping the constant terms $\alpha$ and $\kappa$, we define the following intuitive informativeness criterion:
\begin{align}
&I_h(R_\phi \mid \overline{R}, \pi^T, \pi^L) ~:=~ \nonumber \\  &\mathbb{E}_{\mu^{\pi^L}_{s,a}}\bigg[ \mu^{\pi^T}_s \cdot \pi^L(a|s) \cdot \big(A^{\pi^T}_{\overline{R}}(s,a) - {A}^{\pi^T}_{\overline{R}}(s, \pi^L(s))\big) \cdot {A^{\pi^L}_{R_\phi,h} (s,a)} \bigg] .
\label{eq:intuitive-IR-final}
\end{align}
The above criterion doesn't require the knowledge of the learning algorithm $L$ and only relies on $\pi^L$, $\overline{R}$, and $\pi^T$. Therefore, it serves as a generic informativeness measure that can be used to evaluate the usefulness of reward functions for a range of limited-capacity learners, specifically those with different $h$-horizon planning budgets. In practice, we use the criterion $I_h$ with $h=1$. In this case, the criterion simplifies to the following form:
\vspace{-1.5mm}
\begin{align*}
&I_{h=1}(R_\phi \mid \overline{R}, \pi^T, \pi^L) ~:=~  \mathbb{E}_{\mu^{\pi^L}_{s,a}}\bigg[ \mu^{\pi^T}_s \cdot \pi^L(a|s) \\
& \quad \quad \quad \cdot \big( A^{\pi^T}_{\overline{R}}(s,a)- {A}^{\pi^T}_{\overline{R}}(s, \pi^L(s)) \big) \cdot \big(R_\phi (s,a) - R_\phi (s,\pi^L(s))\big) \bigg] , 
\vspace{-1.5mm}
\end{align*}
where $R_\phi (s,\pi^L(s)) = \Expectover{\pi^L(b|s)}{R_\phi (s,b)}$. Intuitively, this criterion measures the alignment of a reward function $R_\phi$ with better actions according to policy $\pi^T$, and how well it boosts the reward values for these actions in each state.

\vspace{-1mm}
\subsection{Using $I_h(R)$ in \AlgOurs{} Framework}
\label{subsec:info-criterion-expert}

\looseness-1Next, we will use the informativeness criterion $I_h$ for designing reward functions to accelerate the training process of a learning agent within the \AlgOurs{} framework. Specifically, we use $I_h$ in place of $I_L$ to address the reward design problem formulated in Eq.~\eqref{eq:reward-design-problem-orig}:
\begin{equation}
\max_{R_\phi \in \mathcal{R}} I_h(R_\phi \mid \overline{R}, \pi^T, \pi^L_{k-1}) ,
\label{eq:reward-design-problem}
\end{equation}
where the set $\mathcal{R}$ captures the additional constraints discussed in Section~\ref{sec:expadard-formal} (e.g., $\mathcal{R} = \mathcal{R}_\textnormal{inv} \cap \mathcal{R}_\textnormal{str}$). In Section~\ref{sec:evaluation}, we will implement \AlgOurs{} framework with two types of structural constraints and design adaptive reward functions for different learners; below, we theoretically showcase the utility of using $I_h$ by analyzing the improvement in the convergence in a simplified setting.

More concretely, we present a theoretical analysis of the reward design problem formulated in Eq.~\eqref{eq:reward-design-problem} without structural constraints and in a simplified setting to illustrate how this informativeness criterion for adaptive reward shaping can substantially improve the agent's convergence speed toward the target policy. For our theoretical analysis, we consider a finite MDP $M$, with the target policy $\pi^T$ being an optimal policy for this MDP. We use a tabular representation for the reward, i.e., $\phi \in \mathbb{R}^{\abs{\mathcal{S}} \cdot \abs{\mathcal{A}}}$. We consider a constraint set $\mathcal{R} = \bcc{R: \abs{R\brr{s,a}} \leq R_\mathrm{max}, \forall s \in \mathcal{S}, a \in \mathcal{A}}$. Additionally, we use the informativeness criterion in Eq.~\eqref{eq:intuitive-IR-final} with $h=1$, i.e., $I_{h=1}(R_\phi \mid \overline{R}, \pi^T, \pi^L)$. For the policy, we also use a tabular representation, i.e., $\theta \in \mathbb{R}^{\abs{\mathcal{S}} \cdot \abs{\mathcal{A}}}$. We use a greedy (policy iteration style) learning algorithm $L$ that first learns the $h$-step action-value function $Q_{R_k, h}^{\pi^L_{k-1}}$ w.r.t. current reward $R_k$ and updates the policy by selecting actions greedily based on the value function, i.e., $\pi^L_k (s) \gets \argmax_a Q_{R_k, h}^{\pi^L_{k-1}} (s,a)$ with random tie-breaking. In particular, we consider a learner with $h=1$, i.e., we have $\pi^L_k (s) \gets \argmax_a R_k (s,a)$. For the above setting, the following theorem provides a convergence guarantee for Algorithm~\ref{alg:expert-driven-adaptive-reward-design}.
\begin{theorem}
\label{thm:shaping-complexity}
Consider Algorithm~\ref{alg:expert-driven-adaptive-reward-design} with inputs $\pi^T$, $L$, $I_h$, and $\mathcal{R}$  as described above. We define a policy $\pi^{T, \textnormal{Adv}}$ induced by the advantage function of the target policy $\pi^T$ (w.r.t. $\overline{R}$) as follows: $\pi^{T, \textnormal{Adv}} (s) \gets \argmax_a A_{\overline{R}}^{\pi^T} (s,a)$ with random tie-breaking. Then, the learner's policy $\pi^L_k$ will converge to the policy $\pi^{T, \textnormal{Adv}}$ in $\mathcal{O}(\abs{\mathcal{A}})$ iterations.
\end{theorem}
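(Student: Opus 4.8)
The plan is to collapse the coupled reward-design/learning dynamics into an independent per-state combinatorial elimination process and then count the elimination steps. Fix a round $k$ and a state $s$ with positive occupancy $d^{\pi^L_{k-1}}(s) > 0$ and $d^{\pi^T}(s) > 0$; I will maintain as an invariant that $\pi^L_{k-1}$ is uniform on its support $C_{k-1}(s) := \mathrm{supp}(\pi^L_{k-1}(\cdot \mid s))$, with $C_0(s) = \mathcal{A}$ from a uniform initialization. The first step is to solve $\max_{R \in \mathcal{R}} I_{h=1}$ in closed form. Since the sampling distribution $\mu^{\pi^L}_{s,a}$ contributes a factor $\pi^L(a\mid s)$ and the integrand carries an extra explicit factor $\pi^L(a\mid s)$, the coefficient of each $R(s,a)$ is proportional to $\pi^L(a\mid s)^2$ and hence vanishes for every $a \notin C_{k-1}(s)$. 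On $C_{k-1}(s)$ the baseline $A^{\pi^T}_{\overline{R}}(s,\pi^L(s))$ equals the support-average advantage, so writing $g(s,a) := A^{\pi^T}_{\overline{R}}(s,a) - A^{\pi^T}_{\overline{R}}(s,\pi^L(s))$ we get $\sum_{a \in C_{k-1}(s)} g(s,a) = 0$; this makes the $R(s,\pi^L(s))$ baseline term cancel and reduces the per-state objective to a linear form proportional to $\sum_{a \in C_{k-1}(s)} g(s,a)\, R(s,a)$. Maximizing over the box $\abs{R(s,a)} \le R_\mathrm{max}$ then yields $R_k(s,a) = R_\mathrm{max}\cdot \mathrm{sign}(g(s,a))$ for $a \in C_{k-1}(s)$.

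Second, I would read off the learner's update. The designed reward attains its maximal value $R_\mathrm{max}$ exactly on the actions with $g(s,a) > 0$, i.e.\ those whose advantage strictly exceeds the support-average advantage $\bar A_{k-1}(s)$. Reading the learner's ``random tie-breaking'' as yielding the policy that is uniform over $\argmax_a R_k(s,a)$, the new support becomes $C_k(s) = \{ a \in C_{k-1}(s) : A^{\pi^T}_{\overline{R}}(s,a) > \bar A_{k-1}(s) \}$, which simultaneously re-establishes the uniform-on-support invariant for the next round. This stochastic reading is essential: for a genuinely deterministic $\pi^L$ every factor in $I_{h=1}$ is evaluated at the unique support action, where $g = 0$, so the criterion degenerates to the constant zero and carries no signal.

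Third is the counting. The map $C_{k-1}(s) \mapsto C_k(s)$ keeps only the strictly-above-mean elements of a finite multiset of advantages, so it removes at least one action whenever the elements are not all equal (the minimum is always strictly below the mean), and it is stationary precisely when every surviving action attains the maximal advantage, i.e.\ $C_k(s) = \argmax_a A^{\pi^T}_{\overline{R}}(s,a) = \mathrm{supp}(\pi^{T,\mathrm{Adv}}(\cdot \mid s))$. Since $\abs{C_0(s)} = \abs{\mathcal{A}}$ and at least one action leaves each not-yet-stationary candidate set per round, the fixed point is reached in at most $\abs{\mathcal{A}} - 1$ rounds uniformly over $s$, giving the $\mathcal{O}(\abs{\mathcal{A}})$ bound; at the fixed point $\pi^L_k$ selects exactly the maximal-advantage actions and therefore coincides with $\pi^{T,\mathrm{Adv}}$.

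I expect the main obstacles to be the two degeneracies of the criterion rather than the counting. The first, as noted, is that $I_{h=1}$ vanishes for deterministic policies, which forces the uniform-over-ties interpretation together with the uniform-on-support invariant. The second is that $I_{h=1}$ leaves $R_k(s,a)$ unconstrained for $a \notin C_{k-1}(s)$ (its coefficient is zero), so I must separately rule out re-selection of previously eliminated actions. I would handle this by showing that the support-average advantage $\bar A_k(s)$ is nondecreasing in $k$ --- discarding below-average elements can only raise the mean --- so any action eliminated earlier has advantage below the current, higher threshold and thus receives reward $\le 0 < R_\mathrm{max}$ under the natural sign convention, keeping it out of every subsequent $\argmax$. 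Finally I would record the standing requirement that $d^{\pi^L_{k-1}}(s) > 0$ and $d^{\pi^T}(s) > 0$ on the relevant states: these occupancy measures enter only as positive per-state scalars and hence never affect the sign or $\argmax$ structure driving the elimination.
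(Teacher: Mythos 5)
Your proposal follows essentially the same route as the paper's proof: decompose $I_{h=1}$ state-by-state, solve the box-constrained linear maximization to get the $\pm R_\mathrm{max}$ sign solution, and track the greedy learner's uniform-on-support policy as it eliminates below-average-advantage actions round by round, yielding $\mathcal{O}(\abs{\mathcal{A}})$ convergence to $\pi^{T, \textnormal{Adv}}$. If anything, your treatment is more careful than the paper's own worst-case sketch: the paper silently assigns $-R_\mathrm{max}$ to previously eliminated actions even though their coefficient in the objective is zero (so the optimizer is indifferent), whereas you justify keeping them out via monotonicity of the support-average advantage, and you also make explicit the occupancy-positivity condition that the paper leaves implicit.
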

\looseness-1Proof and additional details are provided in appendices of the paper. We note that the target policy $\pi^T$ does not need to be optimal for better convergence, and the results also hold with a sufficiently good (weak) target policy $\pi^{\widetilde{T}}$ s.t. $\pi^{\widetilde{T}, \textnormal{Adv}}$ is near-optimal.

\section{Experimental Evaluation}
\label{sec:evaluation}

In this section, we evaluate our expert-driven explicable and adaptive reward design framework, \AlgOurs{}, on two environments:  \envfourroom{} (Section~\ref{sec:evaluation:envfourroom}) and \envlinekey{} (Section~\ref{sec:evaluation:envlinekey}).
\envfourroom{} corresponds to a navigation task in a grid-world where the agent has to learn a policy to quickly reach the goal location in one of four rooms, starting from an initial location. Even though this environment has small state and action spaces, it provides a rich problem setting to validate different reward design techniques. In fact, variants of \envfourroom{} have been used in the literature~\cite{DBLP:conf/icml/McGovernB01,DBLP:conf/icml/SimsekWB05,grzes2008plan,DBLP:conf/aaai/AsmuthLZ08,DBLP:conf/atal/JamesS09,DBLP:conf/atal/0002CP19,jiang2021temporal_AAAI,devidze2021explicable, devidze2022exploration}. \envlinekey{} corresponds to a navigation task in a one-dimensional space where the agent has to first pick the  key and then reach the goal. The agent's location is represented as a node in a long chain. This environment is inspired by variants of navigation tasks in the literature where an agent needs to perform subtasks~\cite{DBLP:conf/icml/NgHR99,DBLP:conf/icml/RaileanuDSF18,devidze2021explicable, devidze2022exploration}. Both the \envfourroom{} and \envlinekey{} environments have sparse and delayed rewards, which pose a challenge for learning optimal behavior.

\vspace{-1mm}
\subsection{Evaluation on \envfourroom{}}
\label{sec:evaluation:envfourroom}

\textbf{\envfourroom{} (Figure~\ref{fig:envfourroom}).} This environment is based on the work of \cite{devidze2021explicable} that also serves as a baseline technique. The environment is represented as an MDP with $\mathcal{S}$ states corresponding to cells in a grid-world with the ``blue-circle'' indicating the agent's initial location. The goal (``green-star'') is located at the top-right corner cell. Agent can take four actions given by  $\mathcal{A} := \{\textnormal{``up''},  \textnormal{``left''}, \textnormal{``down''}, \textnormal{``right''}\}$. An action takes the agent to the neighbouring cell represented by the direction of the action; however, if there is a wall (``brown-segment''), the agent stays at the current location. There are also a few terminal walls (``thick-red-segment'') that terminate the episode, located at the bottom-left corner cell, where ``left'' and ``down'' actions terminate the episode; at the top-right corner cell, ``right'' action terminates. The agent gets a reward of $R_{\mathrm{max}}$ after it has navigated to the goal and then takes a ``right'' action (i.e., only one state-action pair has a reward); note that this action also terminates the episode. The reward is 0 for all other state-action pairs.
Furthermore, when an agent takes an action $a \in \mathcal{A}$, there is $p_{\textnormal{rand}}=0.05$ probability that an action $a' \in \mathcal{A} \setminus \{a\}$ will be executed. The environment-specific parameters are as follows: $R_{\mathrm{max}}=10$, $\gamma=0.95$, and the environment resets after a horizon of $H=30$ steps.

\looseness-1\textbf{Reward structure.} In this environment, we consider a configuration of nine $3\times3$ grids along with a single $1\times1$ grid representing the goal state, as visually depicted in Figure~\ref{fig:envfourroom.feature}. To effectively represent the state space, we employ a state abstraction function denoted as $\psi: \mathcal{S} \to \{0,1\}^{10}$. For each state $s \in \mathcal{S}$, the $i$-th entry of $\psi(s)$ is set to $1$ if $s$ resides in the $i$-th grid, and $0$ otherwise. Building upon this state abstraction, we introduce a feature representation function, $f: \mathcal{S} \times \mathcal{A} \to \{0,1\}^{10 \cdot \abs{\mathcal{A}}}$, defined as follows: $f(s,a)_{(\cdot,a)} = \psi(s)$, and $f(s,a)_{(\cdot,a')} = \mathbf{0}, \forall a' \neq a$. Here, for any vector $v \in \{0,1\}^{10 \cdot \abs{\mathcal{A}}}$, we use the notation $v_{(i,a)}$ to refer to the $(i,a)$-th entry of the vector. Finally, we establish the set $\mathcal{R}_\textnormal{str} = \{R_\phi: \mathcal{S} \times \mathcal{A} \to \mathbb{R} \text{ where } \phi \in \mathbb{R}^d\}$, where $R_\phi(s,a) = \ipp{\phi}{f(s,a)}$. Further, we define $\mathcal{R} := \mathcal{R}_\textnormal{inv} \cap \mathcal{R}_\textnormal{str}$ as discussed in Section~\ref{sec:expadard-formal}. We note that $\overline{R} \in \mathcal{R}$.

\looseness-1\textbf{Evaluation setup.} We conduct our experiments with a tabular REINFORCE agent~\cite{sutton2018reinforcement}, and employ an optimal policy under the underlying reward function $\overline{R}$ as the target policy $\pi^T$. Algorithm~\ref{alg:expert-driven-adaptive-reward-design} provides a sketch of the overall training process and shows how the agent's training interleaves with the expert-driven reward design process. Specifically, during training, the agent receives rewards based on the designed reward function ${R}$; the performance is always evaluated w.r.t. $\overline{R}$ (also reported in the plots). 
In our experiments, we considered two settings to systematically evaluate the utility of adaptive reward design: (i) a single learner with a uniformly random initial policy (where each action is taken with a probability of $0.25$) and (ii) a diverse group of learners, each with distinct initial policies. To generate a collection of distinctive initial policies, we introduced modifications to a uniformly random policy. These modifications were designed to incorporate a $0.5$ probability of the agent selecting suboptimal actions when encountering various ``gate-states'' (i.e., states with openings for navigation to other rooms). In our evaluation, we included five such unique initial policies. 

\looseness-1\textbf{Techniques evaluated.} We evaluate the effectiveness of the following reward design techniques: 
\begin{enumerate}[label={(\roman*)},leftmargin=16pt]
    \item ${R}^{\orig} := \overline{R}$ is a default baseline without any reward design.
    \item ${R}^{\Invariance}$ is obtained via solving the optimization problem in Eq.~\eqref{eq:reward-design-problem} with the substitution of $I_h$ by a constant. This technique does not involve explicitly maximizing any reward informativeness during the optimization process.
    \item ${R}^{\EXPRD}$  is obtained via solving the optimization problem proposed in \cite{devidze2021explicable}. This optimization problem is equivalent to Eq.~\eqref{eq:reward-design-problem}, with the substitution of $I_h$ by a non-adaptive informativeness criterion. We have employed the hyperparameters consistent with those provided in their work.
    \item \looseness-1${R}_{k}^{\AlgOurs}$ is based on our framework \AlgOurs{} and obtained via solving the optimization problem in Eq.~\eqref{eq:reward-design-problem}. For stability of the learning process, we update the policy more frequently than the reward as typically considered in the literature~\cite{zheng2018learning,memarian2021self,devidze2022exploration} -- we provide additional details in appendices of the paper.
\end{enumerate}

\begin{figure*}[!htb]
\centering
    \begin{subfigure}[b]{.24\textwidth}
    \centering
    {
          \includegraphics[height=3.5cm]{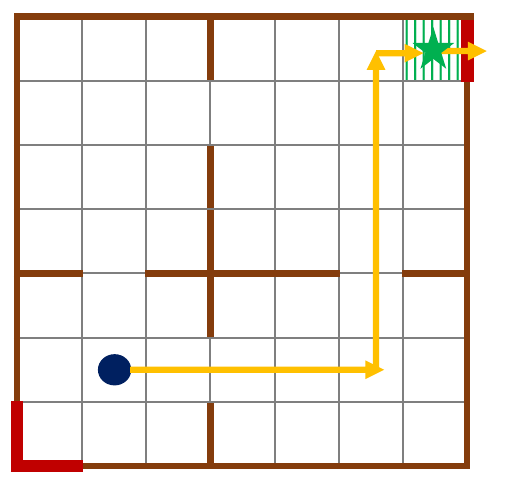}
  \caption{\envfourroom{}: Environment}
      \label{fig:envfourroom}
    }
    \end{subfigure}
    \begin{subfigure}[b]{.24\textwidth}
    \centering
    {
          \includegraphics[height=3.5cm]{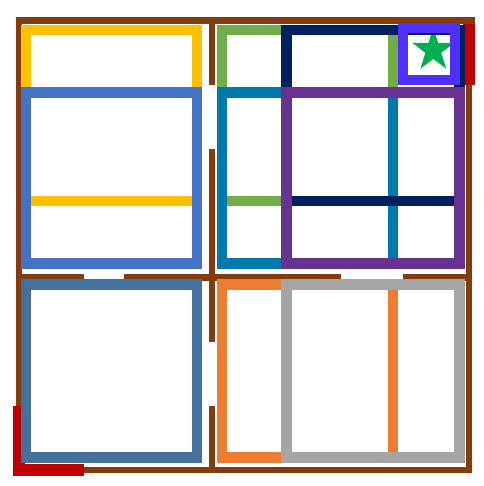}
  \caption{\envfourroom{}: Abstracted features}
      \label{fig:envfourroom.feature}
    }
    \end{subfigure}
    \begin{subfigure}[b]{.24\textwidth}
    \centering
    {
        \includegraphics[width=0.89\textwidth]{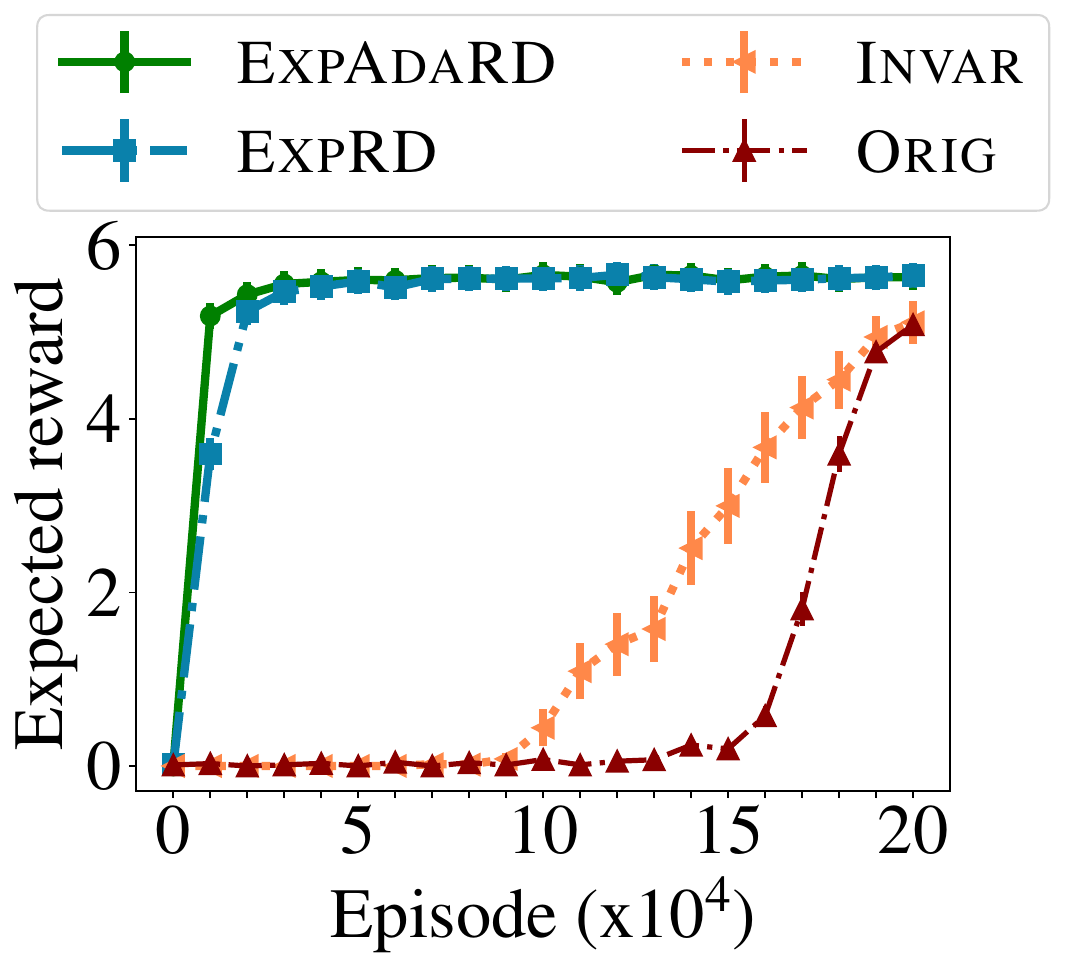}
                \caption{\envfourroom{}: Single learner setting}
        \label{fig:convergence.results.room.reinforce1}
    }
    \end{subfigure}
    \begin{subfigure}[b]{.24\textwidth}
    \centering
    {
        \includegraphics[width=0.89\textwidth]{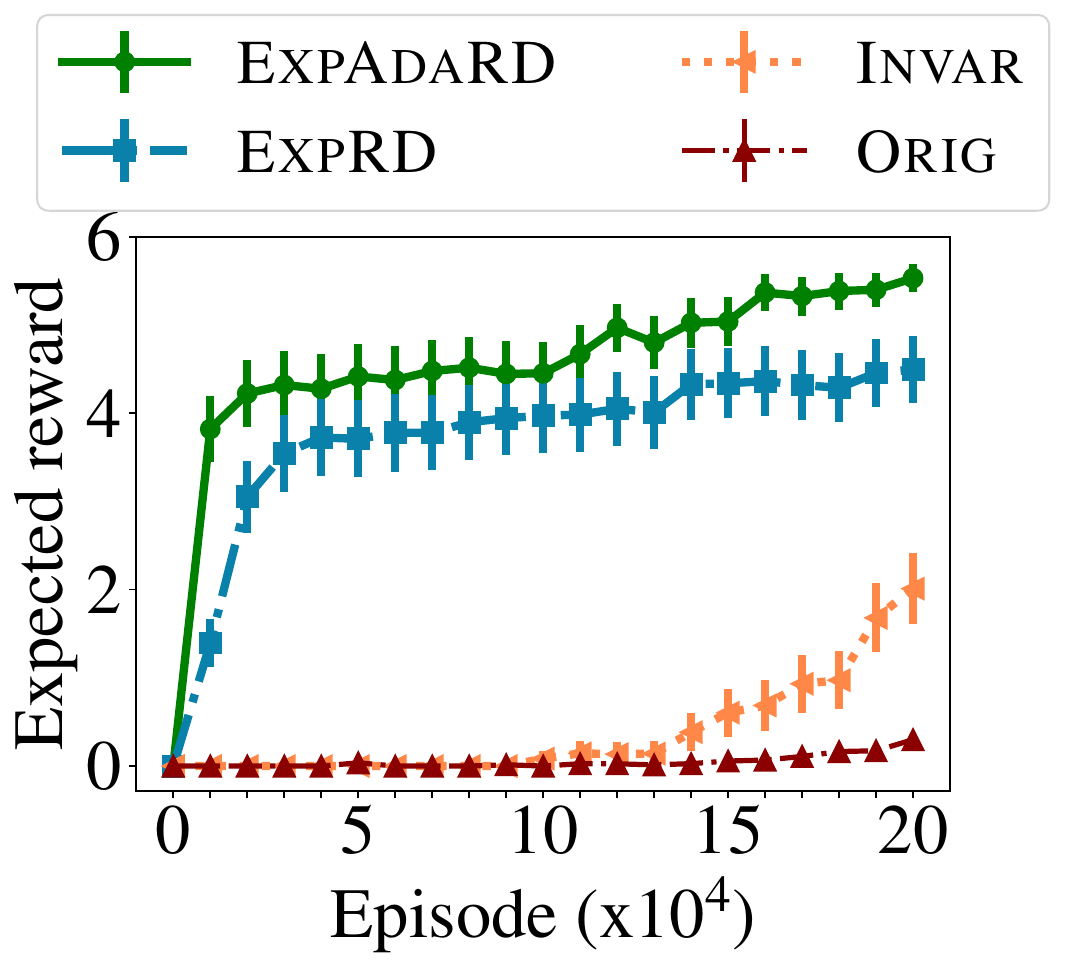}         
        \caption{\envfourroom{}: Diverse learners setting}
        \label{fig:convergence.results.room.reinforce2}
    }
    \end{subfigure}
    \vspace{-2mm}
    \caption{\looseness-1Results for \envfourroom{}. \textbf{(a)} shows the environment. \textbf{(b)} shows the abstracted feature space used for the representation of designed reward functions as a structural constraint. \textbf{(c)} shows results for the setting with a single learner. \textbf{(d)} shows results for the setting with a diverse group of learners with different initial policies. \AlgOurs{} designs adaptive reward functions w.r.t. the learner's current policies, whereas other techniques are agnostic to the learner's policy. See Section~\ref{sec:evaluation:envfourroom} for details.}
    \label{fig:convergence.results.room}
    \Description{Convergence results for the four-room environment.}
    \vspace{-1mm}
\end{figure*}
\begin{figure*}[t!]
\centering
    \begin{subfigure}[b]{.24\textwidth}
    \centering
    {
      \includegraphics[width=4.7cm]{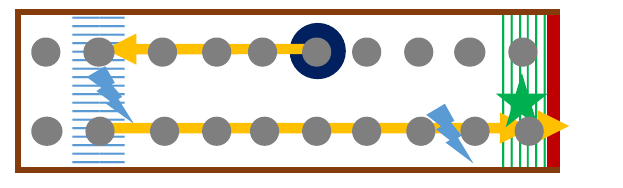}
      \vspace{8mm}
    \caption{\envlinekey{}: Environment} 
    \label{fig:envlinekey}
    }
    \end{subfigure}
    \begin{subfigure}[b]{.24\textwidth}
    \centering
    {  
        \includegraphics[trim={1.3cm 1.3cm 1.3cm 1.3cm}, clip, height=3.35cm]{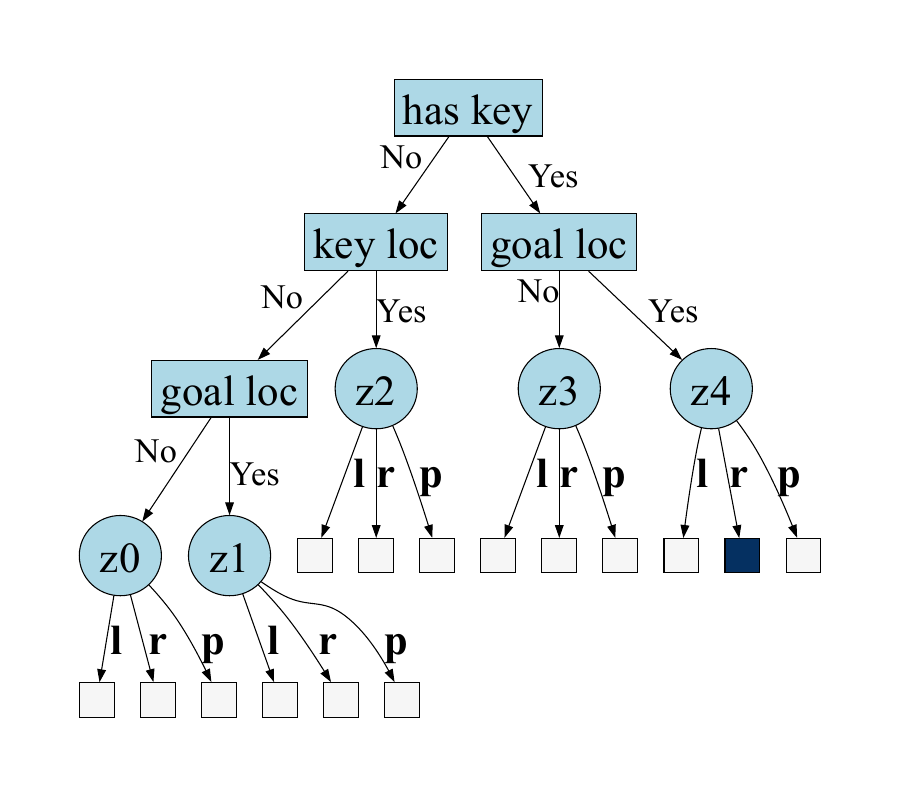}
    \caption{\envlinekey{}: Tree-based features} 
    \label{fig:envlinekey.tree.structure}
    }
    \end{subfigure}
    \begin{subfigure}[b]{.24\textwidth}
    \centering
    {
      \includegraphics[width=0.89\textwidth]{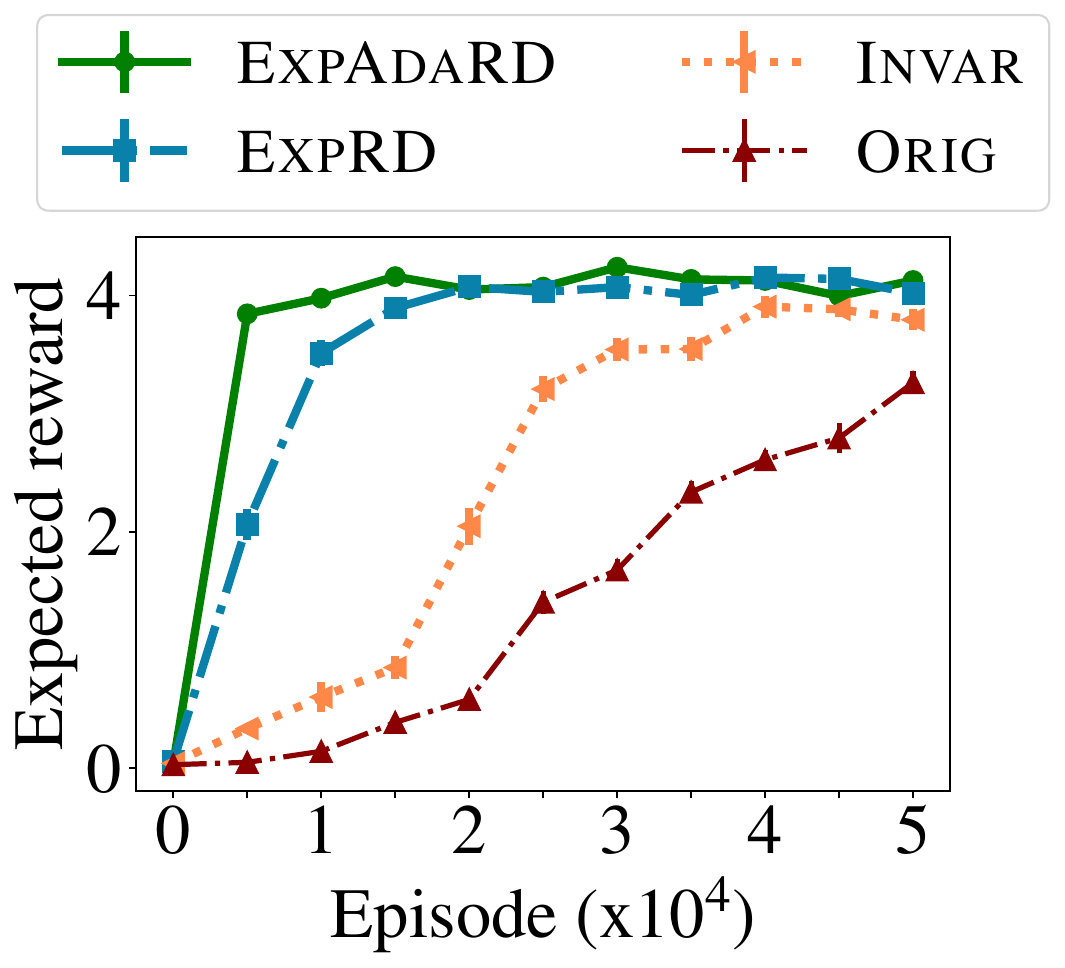}
         \caption{\envlinekey{}: Single learner setting}
        \label{fig:convergence.results.linekey.reinforce.1}
    }
    \end{subfigure}
    \begin{subfigure}[b]{.24\textwidth}
    \centering
    {
      \includegraphics[width=0.89\textwidth]{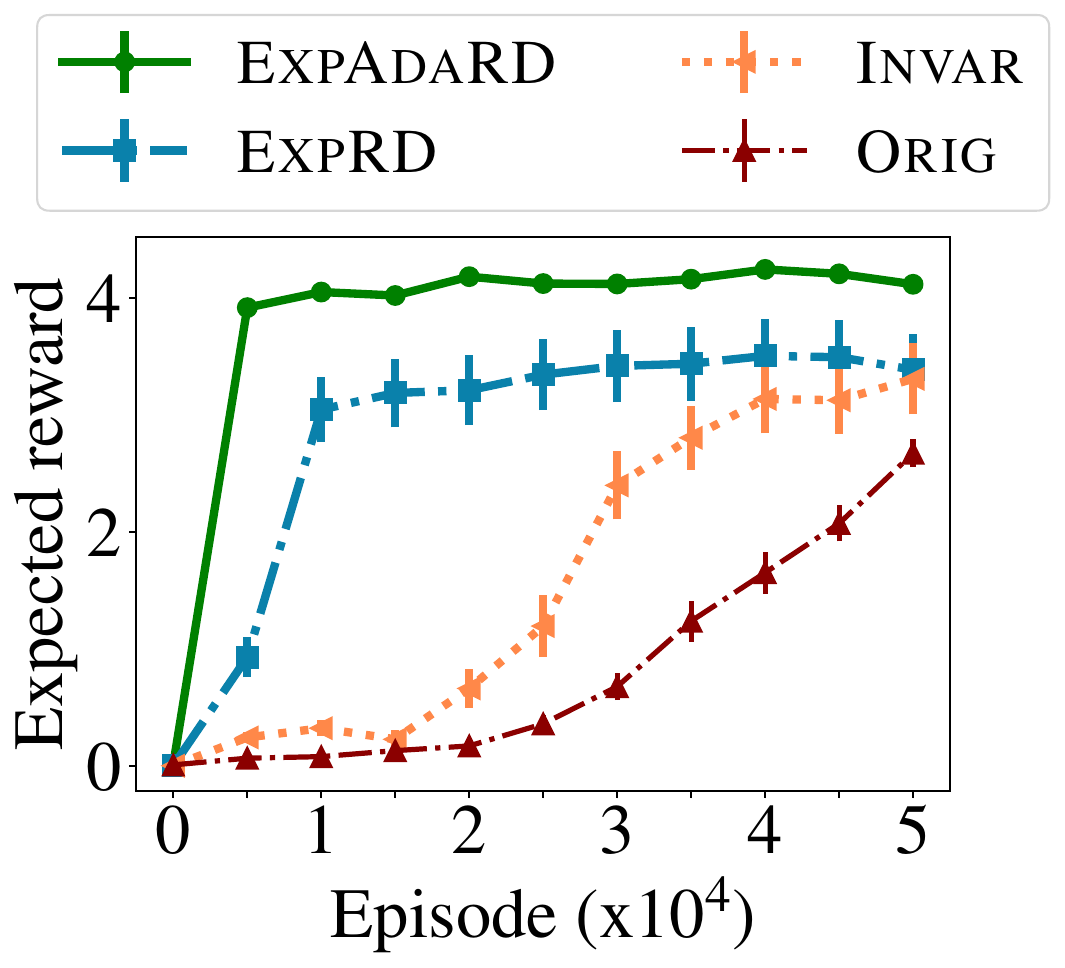}
         \caption{\envlinekey{}: Diverse learners setting}
        \label{fig:convergence.results.linekey.reinforce.2}
    }
    \end{subfigure}
    \vspace{-3mm}
    \caption{Results for \envlinekey{}. \textbf{(a)} shows the environment. \textbf{(b)} shows the tree-based feature space used for the representation of designed reward functions as a structural constraint. \textbf{(c)} shows results for the setting with a single learner. \textbf{(d)} shows results for the setting with a diverse group of learners with different initial policies. \AlgOurs{} designs adaptive reward functions w.r.t. the learner's current policies, whereas other techniques are agnostic to the learner's policy. See Section~\ref{sec:evaluation:envlinekey} for details.}
    \label{fig:convergence.results.linekey}
    \Description{Convergence results for the line-key environment.}
\end{figure*}

\begin{figure*}[!htb]
\begin{subfigure}[b]{.325\textwidth}
    \centering{
    \begin{minipage}[b]{\textwidth}
    \centering
    {
        \includegraphics[width=0.475\textwidth]{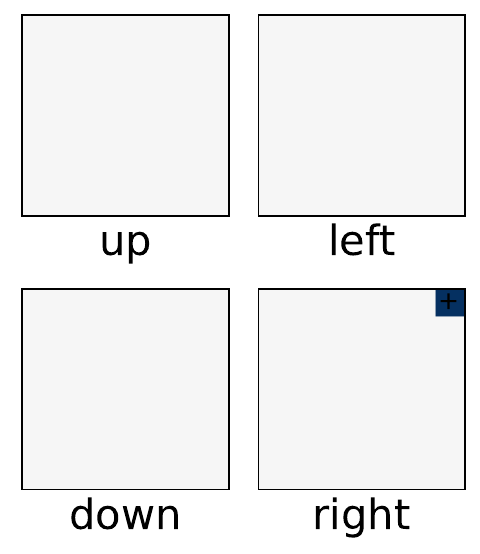}
    }
    \end{minipage}
    }
    \vspace{-6mm}
    \caption{\envfourroom{}: ${R}^{\orig}$}
    \label{fig:designed.rewards.room.Orig} 
\end{subfigure}
\begin{subfigure}[b]{.325\textwidth}
    \centering{
    \begin{minipage}[b]{\textwidth}
    \centering
    {
        \includegraphics[width=0.475\textwidth]{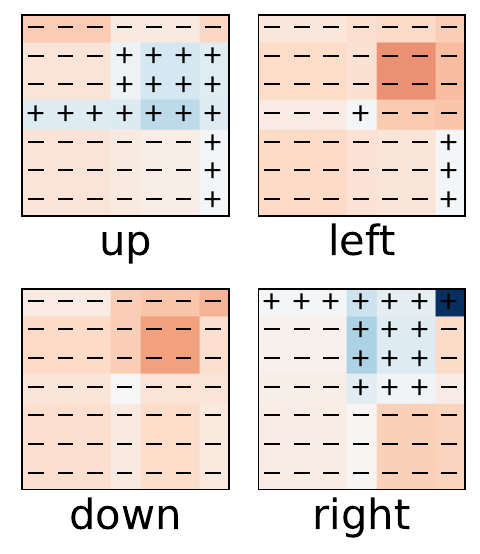}
    }
    \end{minipage} 
    }
    \vspace{-6mm}
    \caption{\envfourroom{}: ${R}^{\Invariance}$}
    \label{fig:designed.rewards.room.Invariance.L1}
\end{subfigure}
\vspace{0.5mm}
\begin{subfigure}[b]{0.325\textwidth}
    \centering{
    \begin{minipage}[b]{\textwidth}
    \centering
    {
        \includegraphics[width=0.475\textwidth]{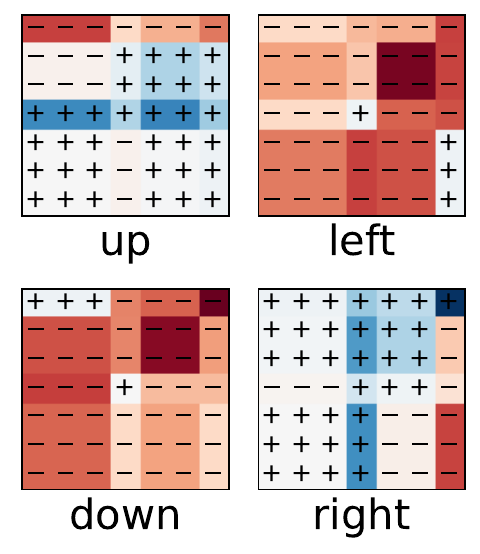}
    }
    \end{minipage}
    }
    \vspace{-6mm}
    \caption{\envfourroom{}: ${R}^{\EXPRD}$}
    \label{fig:designed.rewards.room.EXPRD}   
\end{subfigure}
\vspace{0.5mm}
\begin{subfigure}[b]{\textwidth}
    \centering{
    \begin{minipage}[b]{.19\textwidth}
    \centering
    {
        \includegraphics[width=0.8\textwidth]{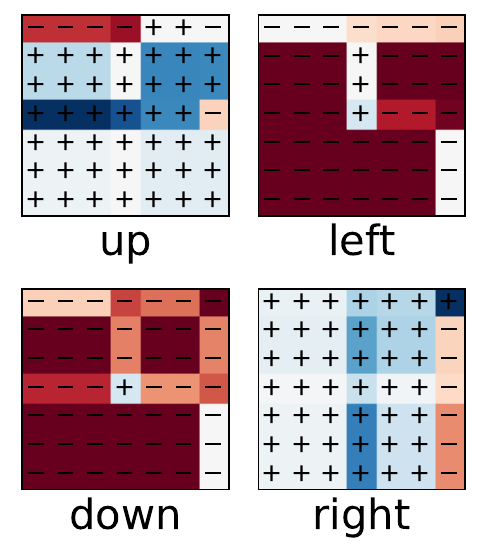}
    }
    \end{minipage}
    \begin{minipage}[b]{.19\textwidth}
    \centering
    {
        \includegraphics[width=0.8\textwidth]{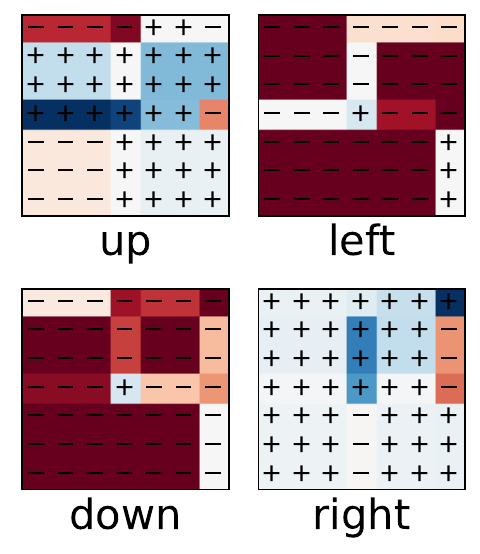}
    }
    \end{minipage}  
    \begin{minipage}[b]{.19\textwidth}
    \centering
    {
        \includegraphics[width=0.8\textwidth]{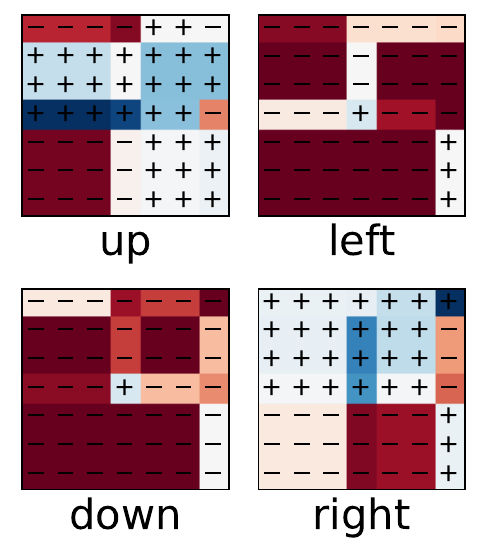}
    }
    \end{minipage} 
    \begin{minipage}[b]{.19\textwidth}
    \centering
    {
        \includegraphics[width=0.8\textwidth]{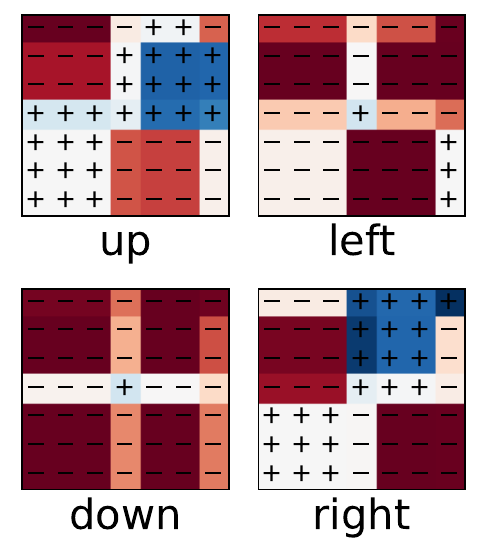}
    }
    \end{minipage}
    \begin{minipage}[b]{.19\textwidth}
    \centering
    {
        \includegraphics[width=0.8\textwidth]{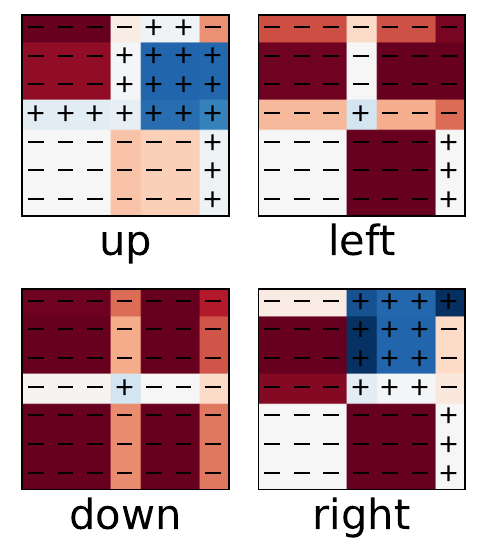}
    }
    \end{minipage}
    }
    \vspace{-2mm}
    \caption{\envfourroom{}: ${R}^{\AlgOurs}$ for learner $1$ at $k=1000, 2000, 3000, 100000, \text{and } 200000$ episodes.}
    \label{fig:designed.rewards.room.Ada_TL.L1}
\end{subfigure}
\vspace{0.5mm}
\begin{subfigure}[b]{\textwidth}
    \centering{
    \begin{minipage}[b]{.19\textwidth}
    \centering
    {
        \includegraphics[width=0.8\textwidth]{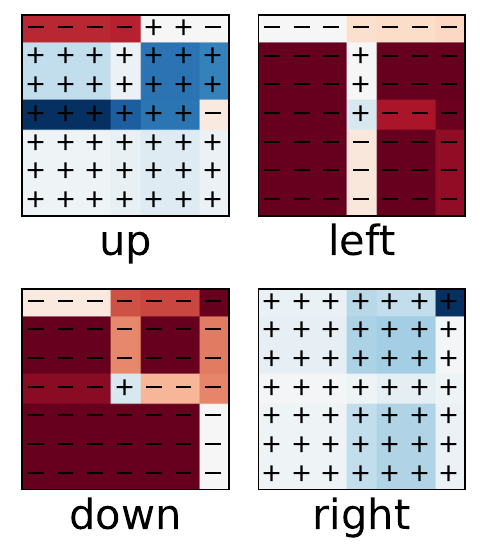}
    }
    \end{minipage}
    \begin{minipage}[b]{.19\textwidth}
    \centering
    {
        \includegraphics[width=0.8\textwidth]{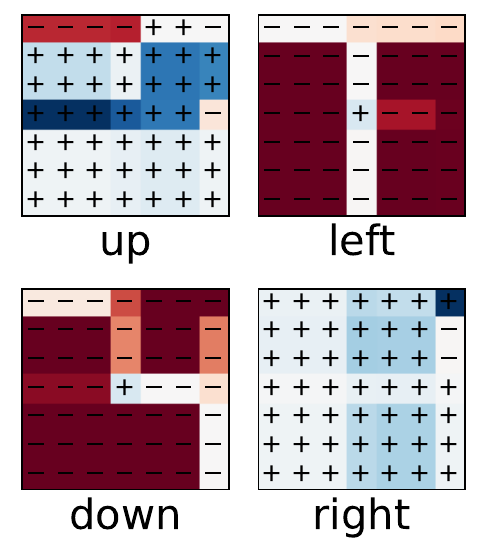}
    }
    \end{minipage}
    \begin{minipage}[b]{.19\textwidth}
    \centering
    {
        \includegraphics[width=0.8\textwidth]{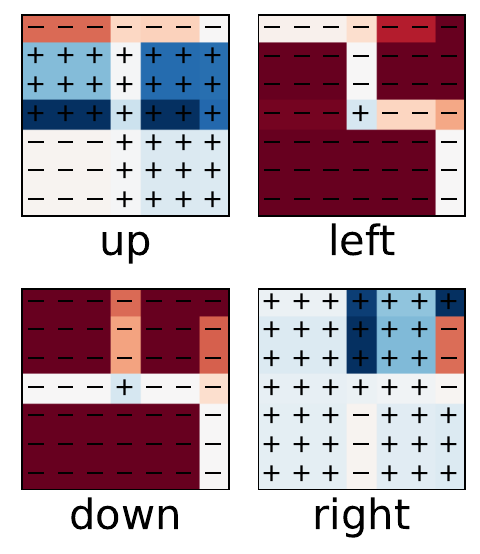}
    }
    \end{minipage}
    \begin{minipage}[b]{.19\textwidth}
    \centering
    {
        \includegraphics[width=0.8\textwidth]{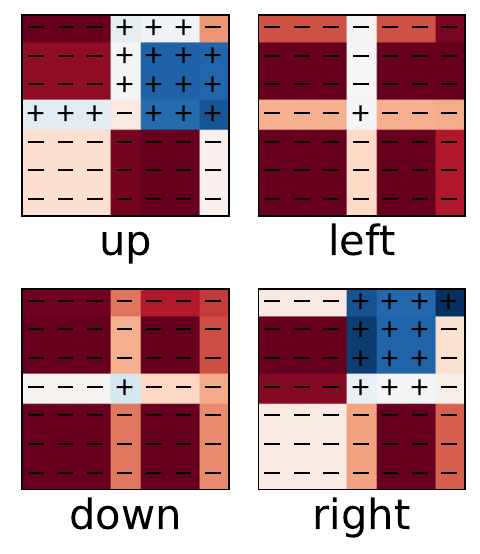}
    }
    \end{minipage}
    \begin{minipage}[b]{.19\textwidth}
    \centering
    {
        \includegraphics[width=0.8\textwidth]{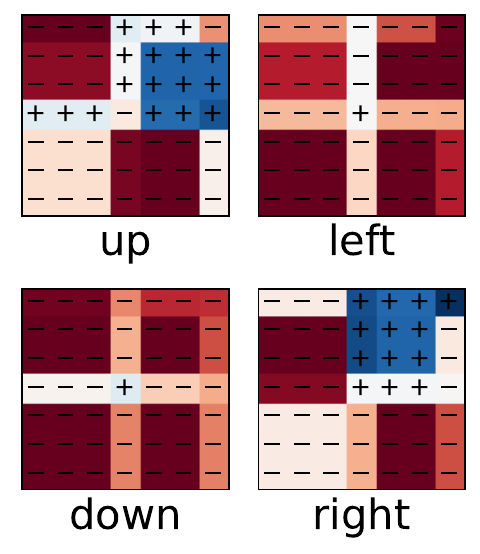}
    }
    \end{minipage}
    }
    \vspace{-2mm}
    \caption{\envfourroom{}: ${R}^{\AlgOurs}$ for learner $2$ at $k=1000, 2000, 3000, 100000, \text{and } 200000$ episodes.}
    \label{fig:designed.rewards.room.Ada_TL.L2} 
\end{subfigure}
\vspace{0.5mm}
\begin{subfigure}[b]{\textwidth}
    \centering{
    \begin{minipage}[b]{.19\textwidth}
    \centering
    {
        \includegraphics[width=0.8\textwidth]{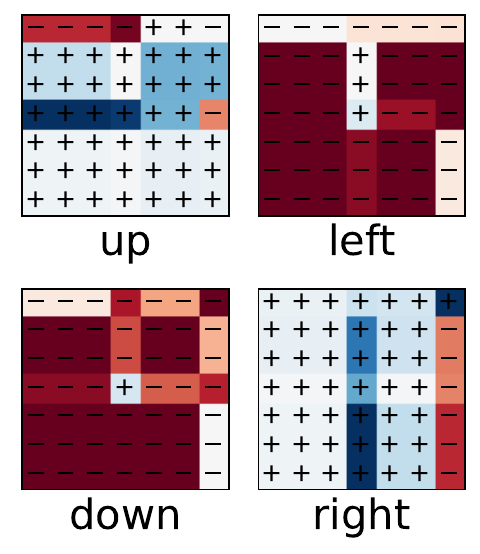}
    }
    \end{minipage}
    \begin{minipage}[b]{.19\textwidth}
    \centering
    {
        \includegraphics[width=0.8\textwidth]{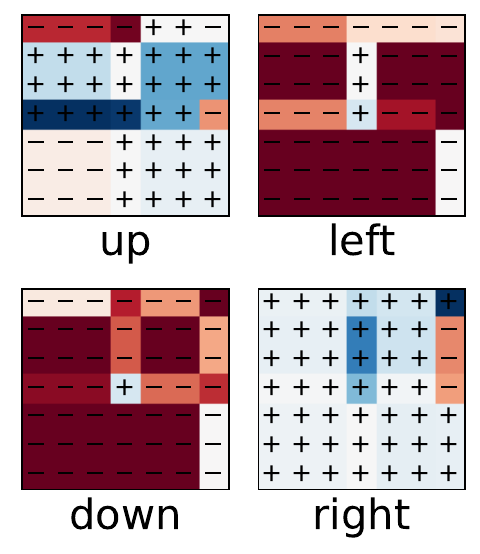}
    }
    \end{minipage}
    \begin{minipage}[b]{.19\textwidth}
    \centering
    {
        \includegraphics[width=0.8\textwidth]{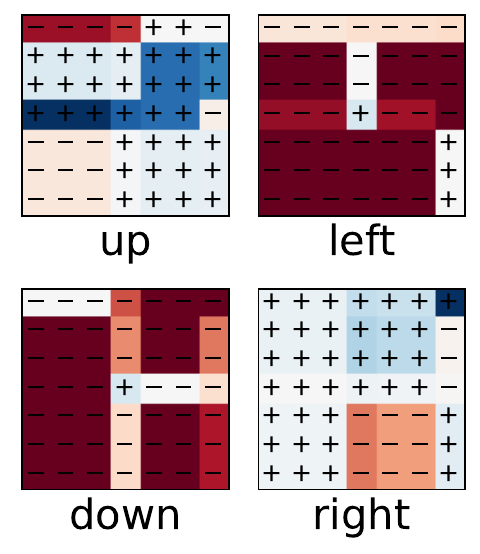}
    }
    \end{minipage}
    \begin{minipage}[b]{.19\textwidth}
    \centering
    {
        \includegraphics[width=0.8\textwidth]{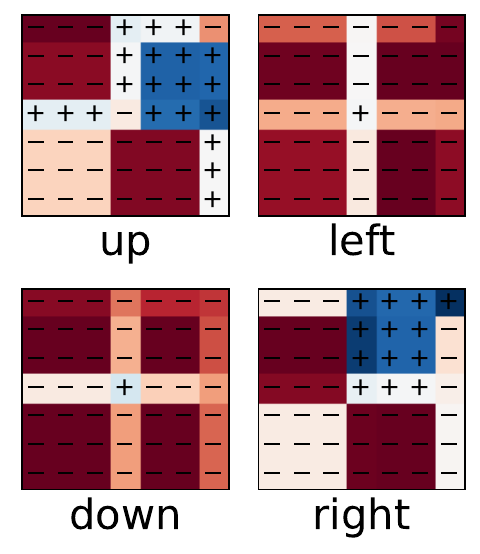}
    }
    \end{minipage}
    \begin{minipage}[b]{.19\textwidth}
    \centering
    {
        \includegraphics[width=0.8\textwidth]{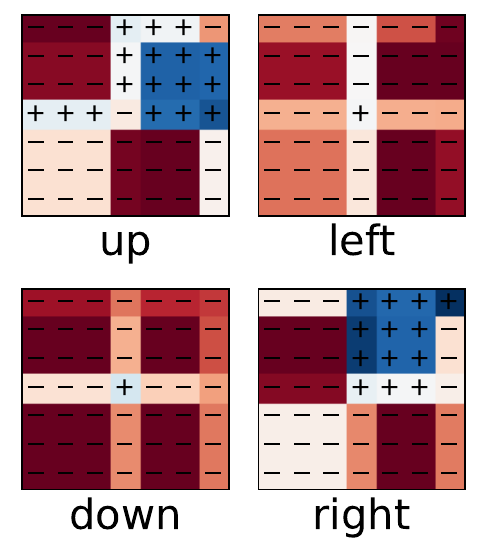}
    }
    \end{minipage}
    }
    \vspace{-2mm}
    \caption{\envfourroom{}: ${R}^{\AlgOurs}$ for learner $3$ at $k=1000, 2000, 3000, 100000, \text{and } 200000$ episodes.}
    \label{fig:designed.rewards.room.Ada_TL.L3} 
\end{subfigure}
    \vspace{-6.5mm}
    \caption{Visualization of reward functions designed by different techniques in the \envfourroom{} environment for all four actions $\{\textnormal{``up''}, \textnormal{``left''}, \textnormal{``down''}, \textnormal{``right''}\}$. \textbf{(a)} shows original reward function ${R}^{\orig}$. \textbf{(b)} shows reward function ${R}^{\Invariance}$. \textbf{(c)} shows reward function ${R}^{\EXPRD}$ designed by expert-driven non-adaptive reward design technique~\cite{devidze2021explicable}. \textbf{(d, e, f)} show reward functions ${R}^{\AlgOurs}$ designed by our framework \AlgOurs{} for three learners, each with its distinct initial policy, at different training episodes $k$. A negative reward is shown in Red color with the sign ``-'', a positive reward is shown in Blue color with the sign ``+'', and a zero reward is shown in white. The color intensity indicates the magnitude of the reward.}
    \label{fig:designed.rewards.room}
    \Description{Designed rewards for the four-room environment.}
    \vspace{-3.5mm}    
\end{figure*}
\begin{figure*}[t!]
\begin{subfigure}[b]{0.325\textwidth}
    \centering{
    \begin{minipage}[b]{.645\textwidth}
    \centering
    { 
        \includegraphics[trim={1.3cm 1.3cm 1.3cm 1.3cm}, clip, height=3.1cm]{figs/results/linekey/Orig_linekey_designed_rewards_with_pool_learner_1_step=100_tree.pdf}
    }
    \end{minipage}
    \begin{minipage}[b]{.333\textwidth}
    \centering
    {
        \includegraphics[height=3.2cm]{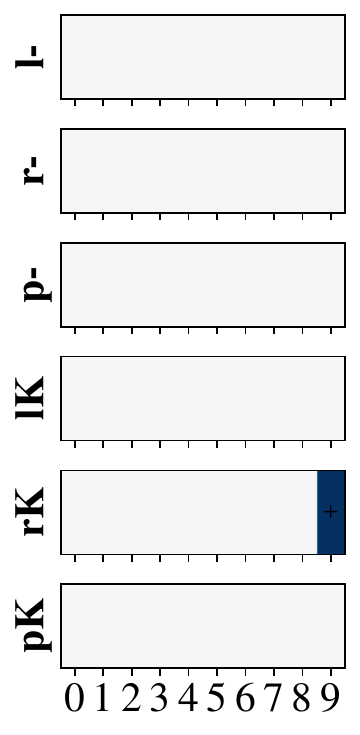}

    }
    \end{minipage}
    }
    \vspace{-5mm}    
    \caption{\envlinekey{}: ${R}^{\orig}$}
    \label{fig-app:designed.rewards.linekey.Orig}
\end{subfigure}
\begin{subfigure}[b]{0.325\textwidth}
    \centering{
    \begin{minipage}[b]{.645\textwidth}
    \centering
    { 
        \includegraphics[trim={1.3cm 1.3cm 1.3cm 1.3cm}, clip, height=3.1cm]{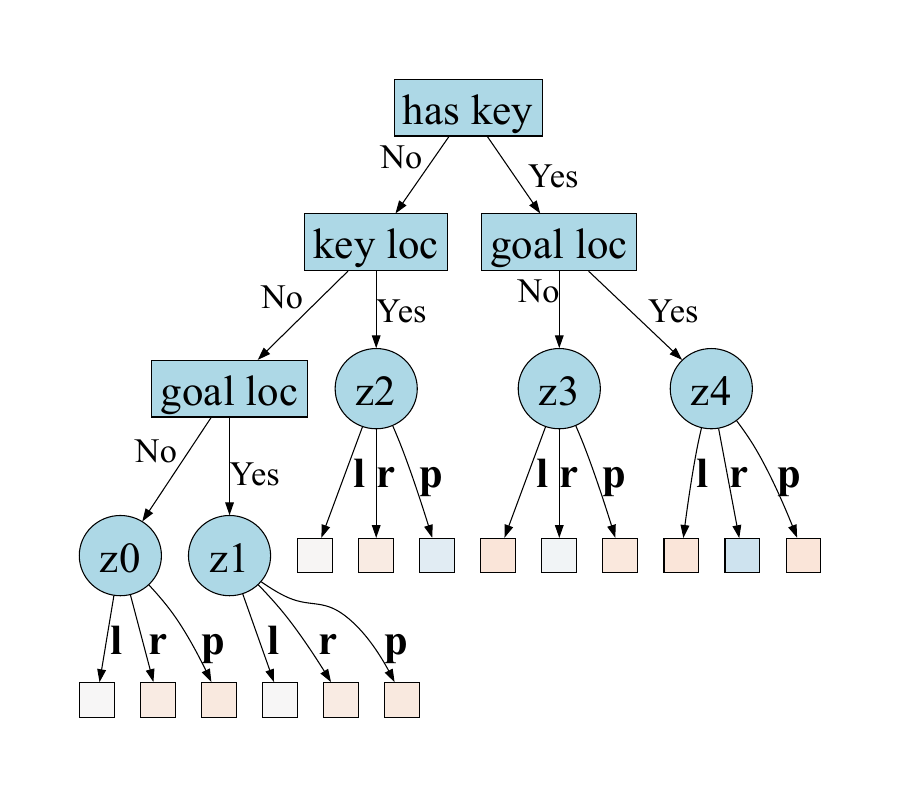}
    }
    \end{minipage}
    \begin{minipage}[b]{.333\textwidth}
    \centering
    {
        \includegraphics[height=3.2cm]{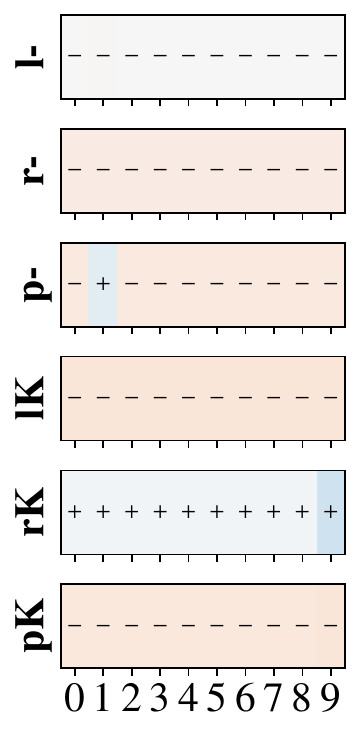}
    }
    \end{minipage}
    }
    \vspace{-5mm}    
    \caption{\envlinekey{}: ${R}^{\Invariance}$}
    \label{fig:designed.rewards.linekey.pool.Invariance}
\end{subfigure}
\vspace{2mm}
\begin{subfigure}[b]{0.325\textwidth}
    \centering{
    \begin{minipage}[b]{.645\textwidth}
    \centering
    { 
        \includegraphics[trim={1.3cm 1.3cm 1.3cm 1.3cm}, clip, height=3.1cm]{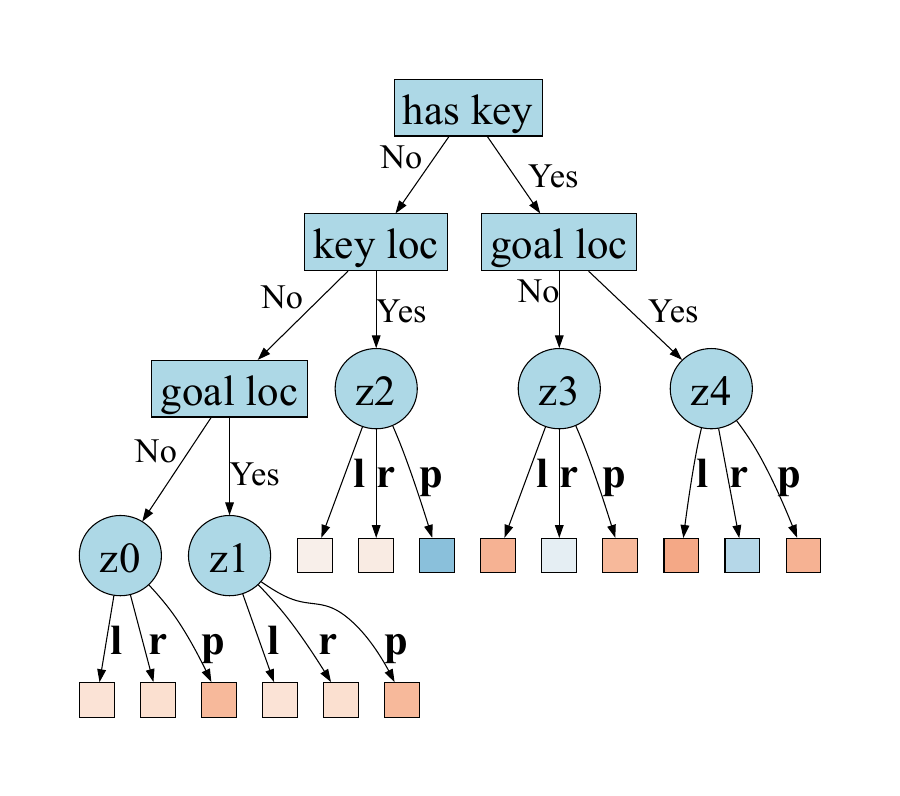}
    }
    \end{minipage}
    \begin{minipage}[b]{.333\textwidth}
    \centering
    {
        \includegraphics[height=3.2cm]{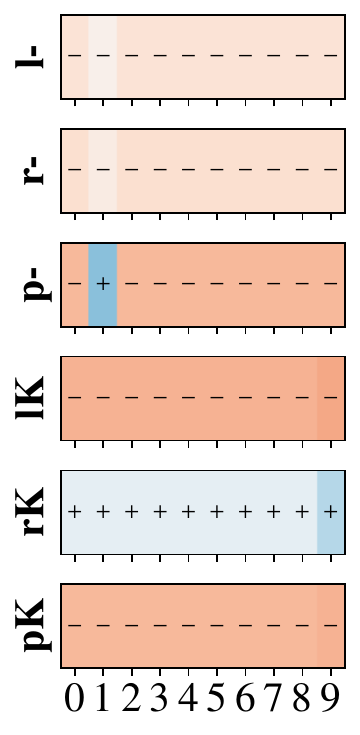}
    }
    \end{minipage}
    }
    \vspace{-5mm}
    \caption{\envlinekey{}: ${R}^{\EXPRD}$}
    \label{fig-app:designed.rewards.linekey.EXPRD}
\end{subfigure}
\vspace{2mm}
\begin{subfigure}[b]{\textwidth}
    \centering{
    \begin{minipage}[b]{.215\textwidth}
    \centering
    { 
        \includegraphics[trim={1.3cm 1.3cm 1.3cm 1.3cm}, clip, height=3.1cm]{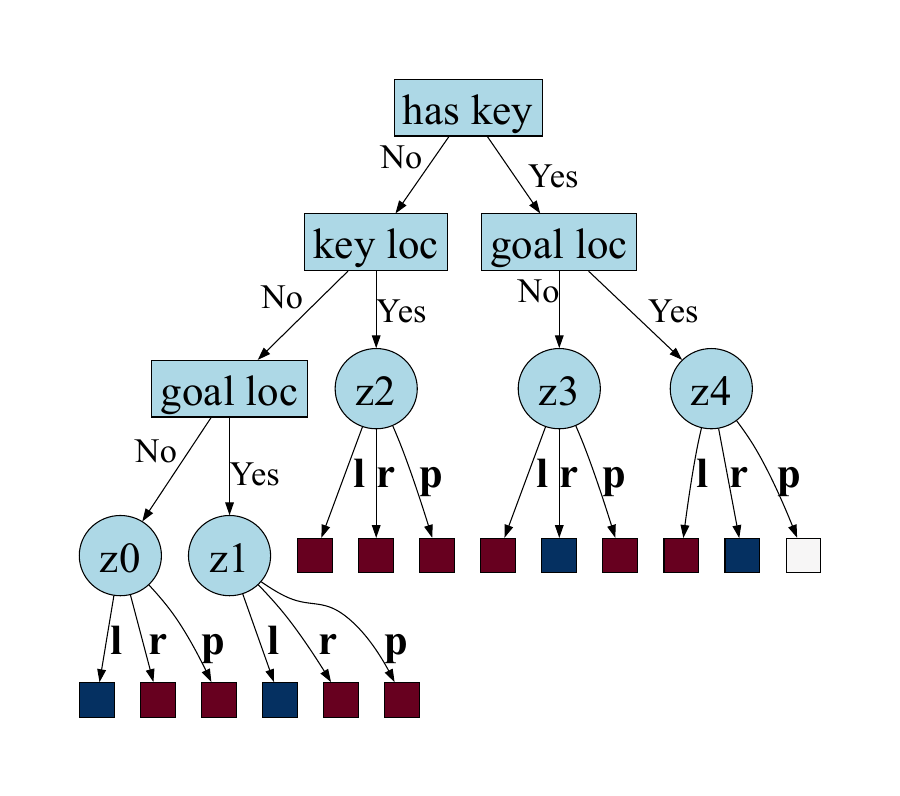}
    }
    \end{minipage}
    \begin{minipage}[b]{.111\textwidth}
    \centering
    {
        \includegraphics[height=3.2cm]{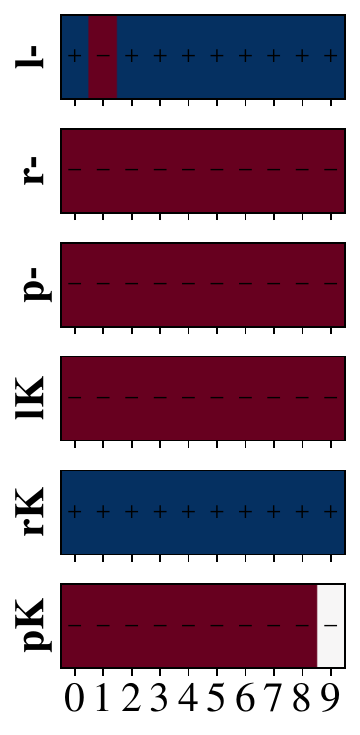}
    }
    \end{minipage}
    \begin{minipage}[b]{.215\textwidth}
    \centering
    {
        \includegraphics[trim={1.3cm 1.3cm 1.3cm 1.3cm}, clip, height=3.1cm]{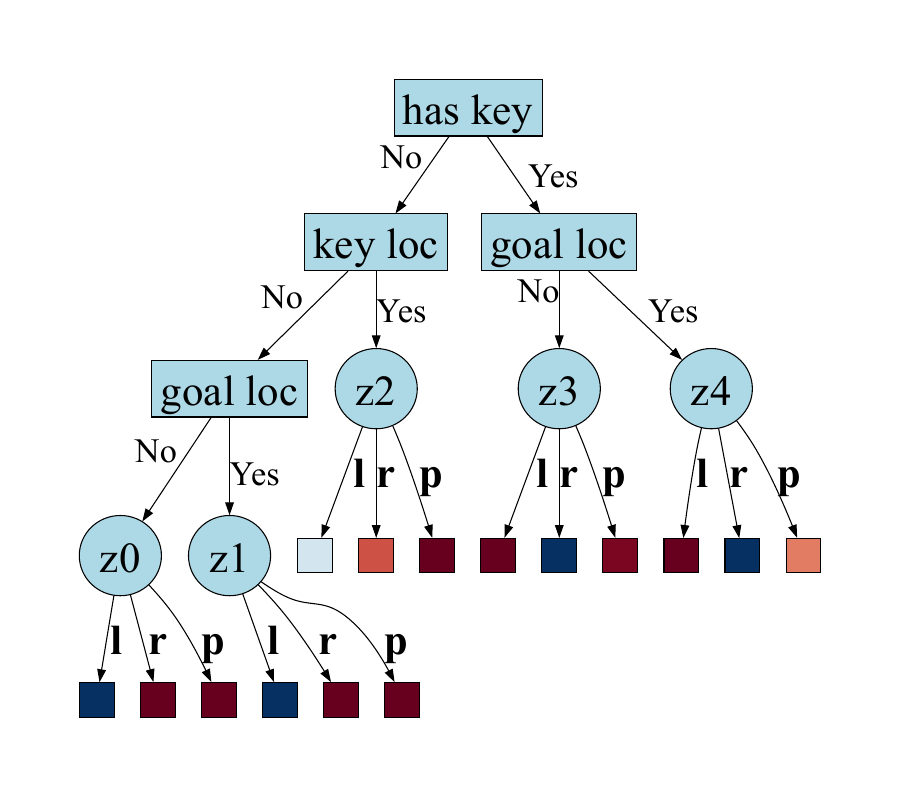}
    }
    \end{minipage}
    \begin{minipage}[b]{.111\textwidth}
    \centering
    {
        \includegraphics[height=3.2cm]{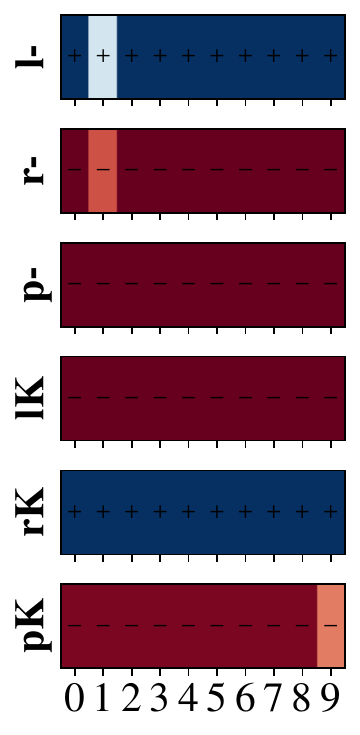}
    }
    \end{minipage}
    \begin{minipage}[b]{.215\textwidth}
    \centering
    {
        \includegraphics[trim={1.3cm 1.3cm 1.3cm 1.3cm}, clip, height=3.1cm]{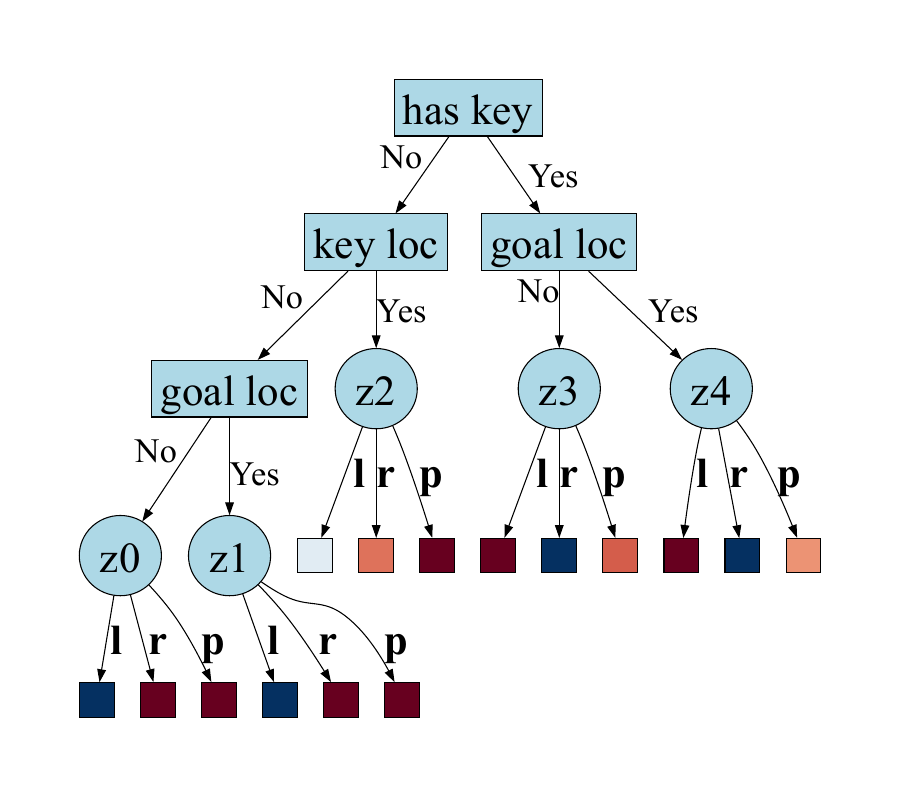}
    }
    \end{minipage}
    \begin{minipage}[b]{.111\textwidth}
    \centering
    {
        \includegraphics[height=3.2cm]{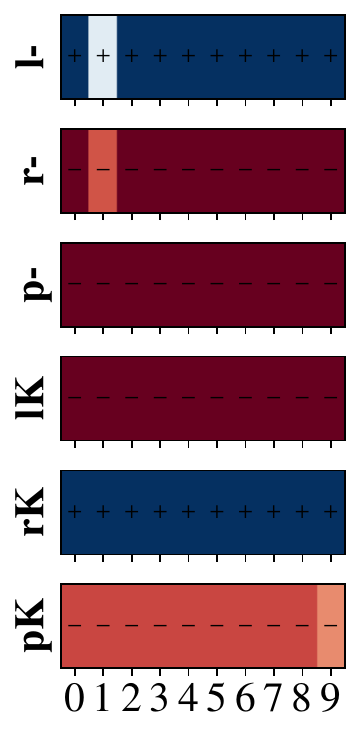}
    }
    \end{minipage}
    }
    \vspace{-4mm}     
    \caption{\envlinekey{}: ${R}^{\AlgOurs}$ for learner $1$  at $k=100, 30000, \text{and } 50000$ episodes.}
    \label{fig:designed.rewards.linekey.pool.Ada_TL.L1}
\end{subfigure}
\vspace{2mm}
\begin{subfigure}[b]{\textwidth}
    \centering{
    \begin{minipage}[b]{.215\textwidth}
    \centering
    { 
        \includegraphics[trim={1.3cm 1.3cm 1.3cm 1.3cm}, clip, height=3.1cm]{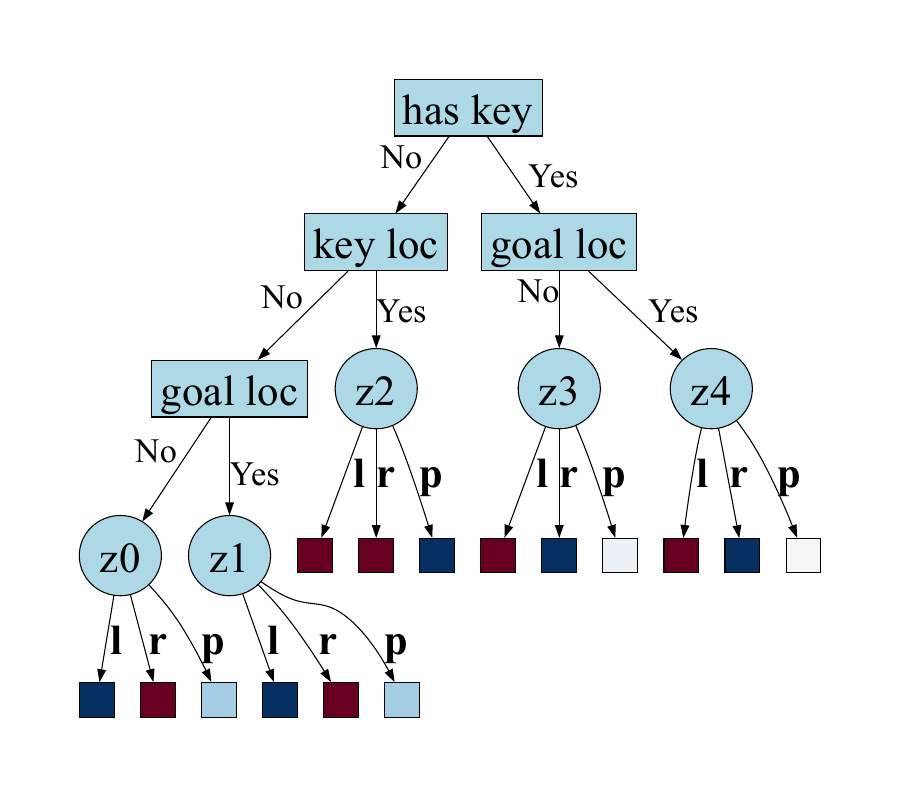}
    }
    \end{minipage}
    \begin{minipage}[b]{.111\textwidth}
    \centering
    {
        \includegraphics[height=3.2cm]{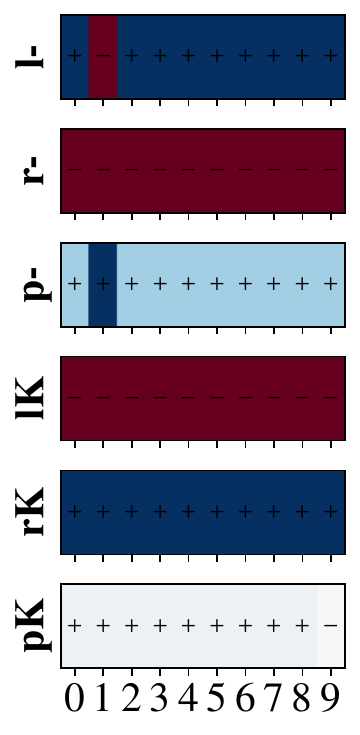}
    }
    \end{minipage}
    \begin{minipage}[b]{.215\textwidth}
    \centering
    {
        \includegraphics[trim={1.3cm 1.3cm 1.3cm 1.3cm}, clip, height=3.1cm]{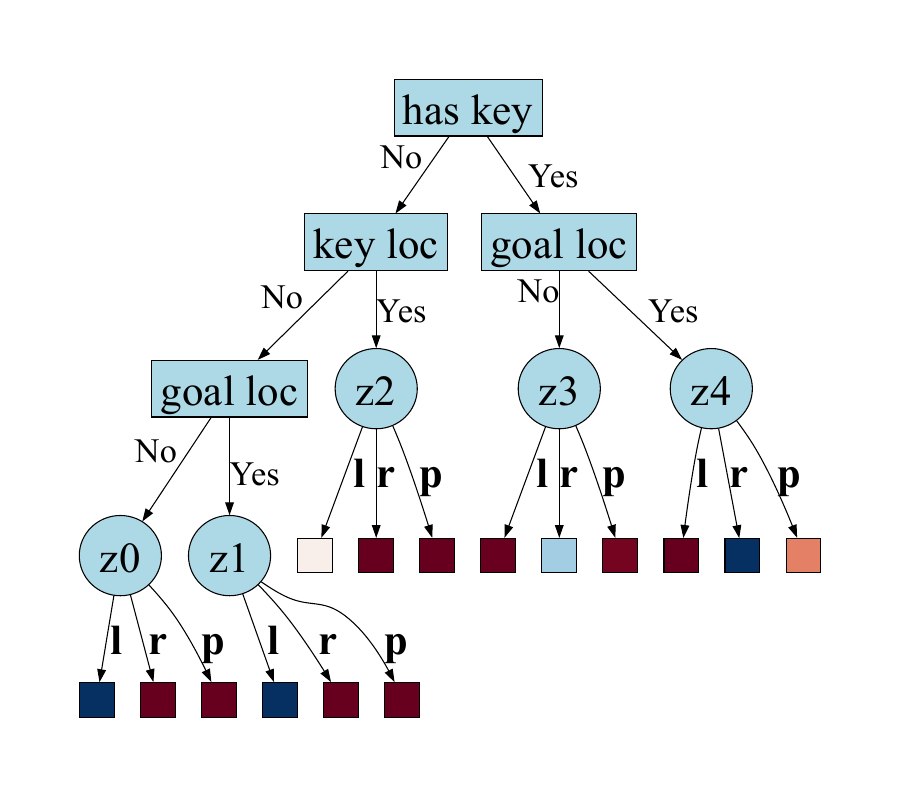}
    }
    \end{minipage}
    \begin{minipage}[b]{.111\textwidth}
    \centering
    {
        \includegraphics[height=3.2cm]{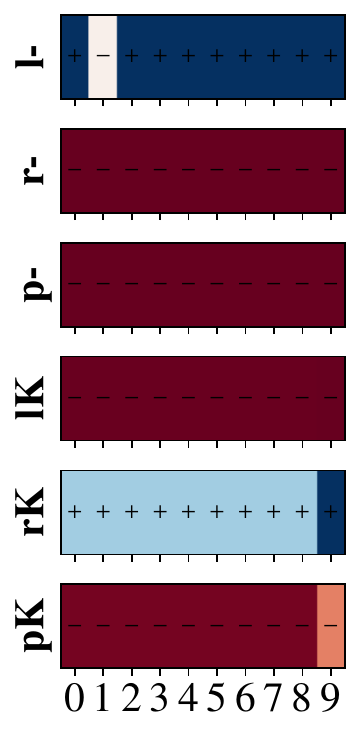}
    }
    \end{minipage}
    \begin{minipage}[b]{.215\textwidth}
    \centering
    {
        \includegraphics[trim={1.3cm 1.3cm 1.3cm 1.3cm}, clip, height=3.1cm]{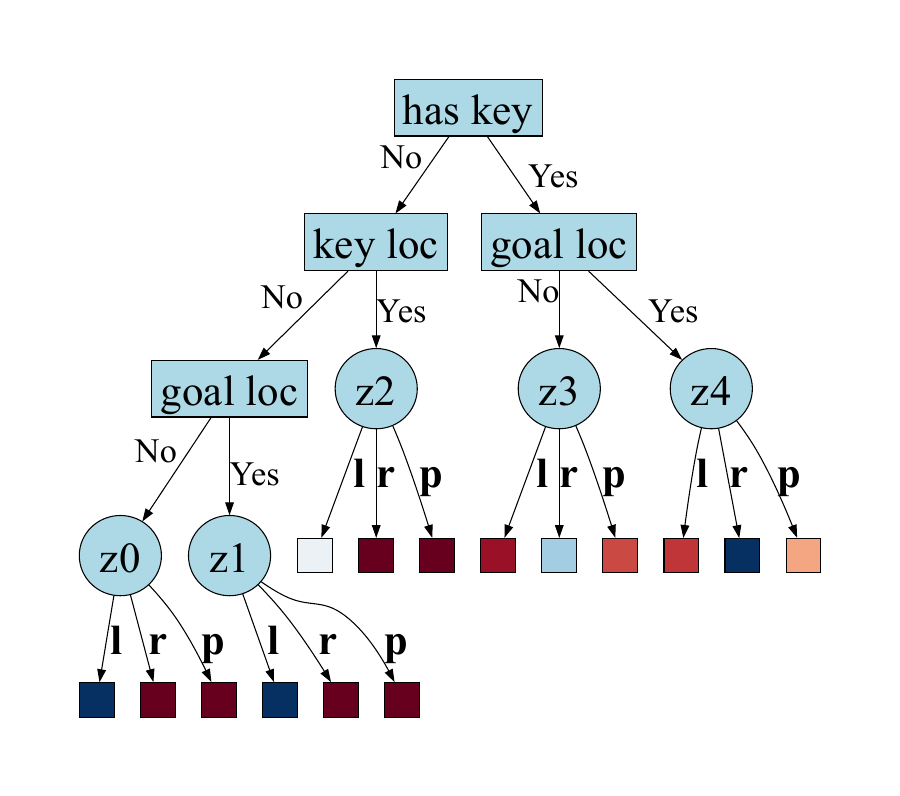}
    }
    \end{minipage}
    \begin{minipage}[b]{.111\textwidth}
    \centering
    {
        \includegraphics[height=3.2cm]{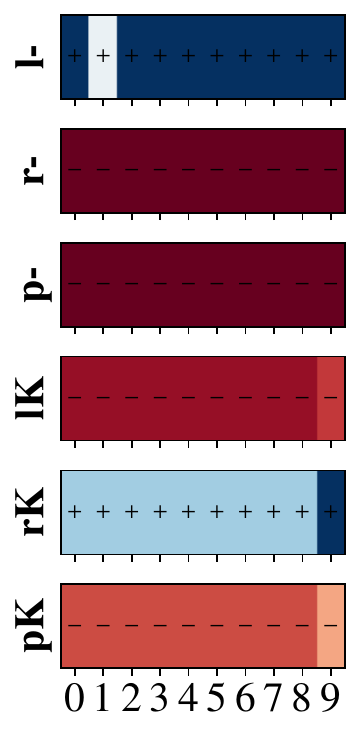}
    }
    \end{minipage}
    }
    \vspace{-4mm}     
    \caption{\envlinekey{}: ${R}^{\AlgOurs}$ for learner $2$  at $k=100, 30000, \text{and } 50000$ episodes.}
    \label{fig:designed.rewards.linekey.pool.Ada_TL.L2}
\end{subfigure}
\vspace{2mm}
\begin{subfigure}[b]{\textwidth}
    \centering{
    \begin{minipage}[b]{.215\textwidth}
    \centering
    { 
        \includegraphics[trim={1.3cm 1.3cm 1.3cm 1.3cm}, clip, height=3.1cm]{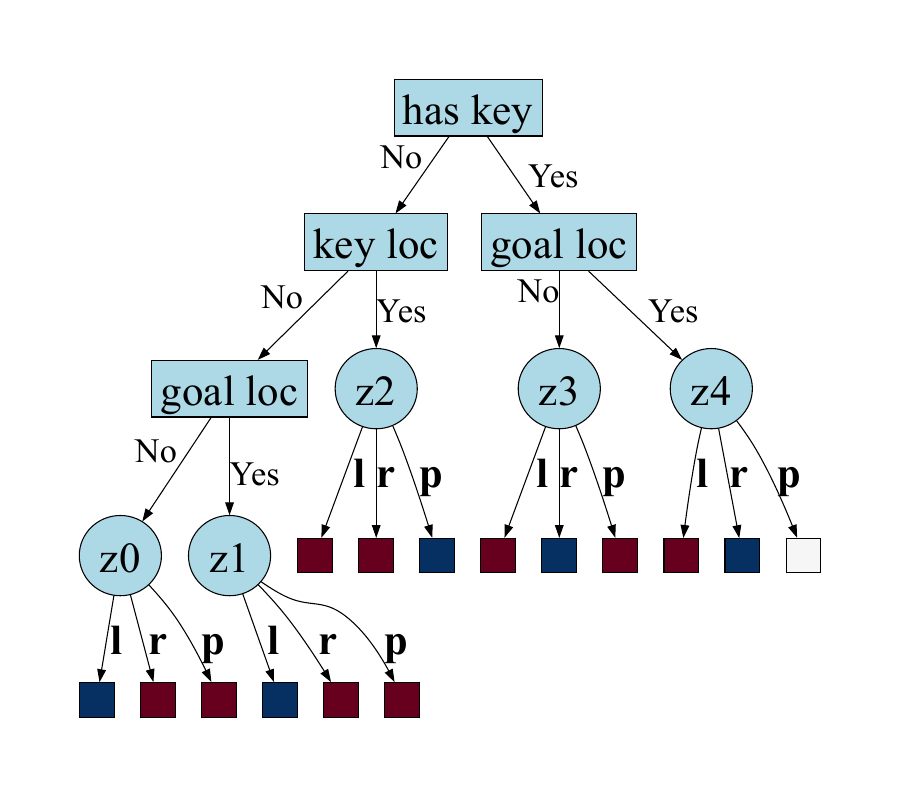}
    }
    \end{minipage}
    \begin{minipage}[b]{.111\textwidth}
    \centering
    {
        \includegraphics[height=3.2cm]{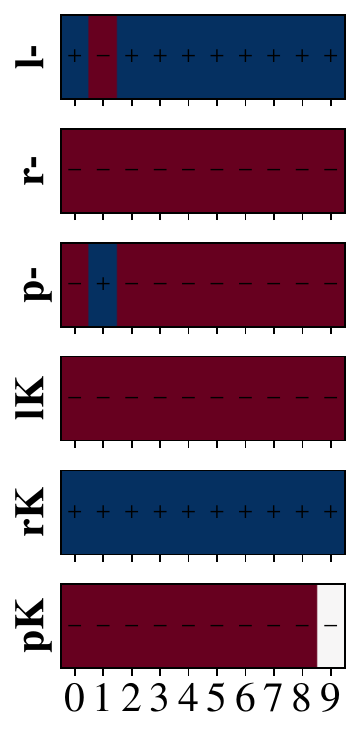}
    }
    \end{minipage}
    \begin{minipage}[b]{.215\textwidth}
    \centering
    {
        \includegraphics[trim={1.3cm 1.3cm 1.3cm 1.3cm}, clip, height=3.1cm]{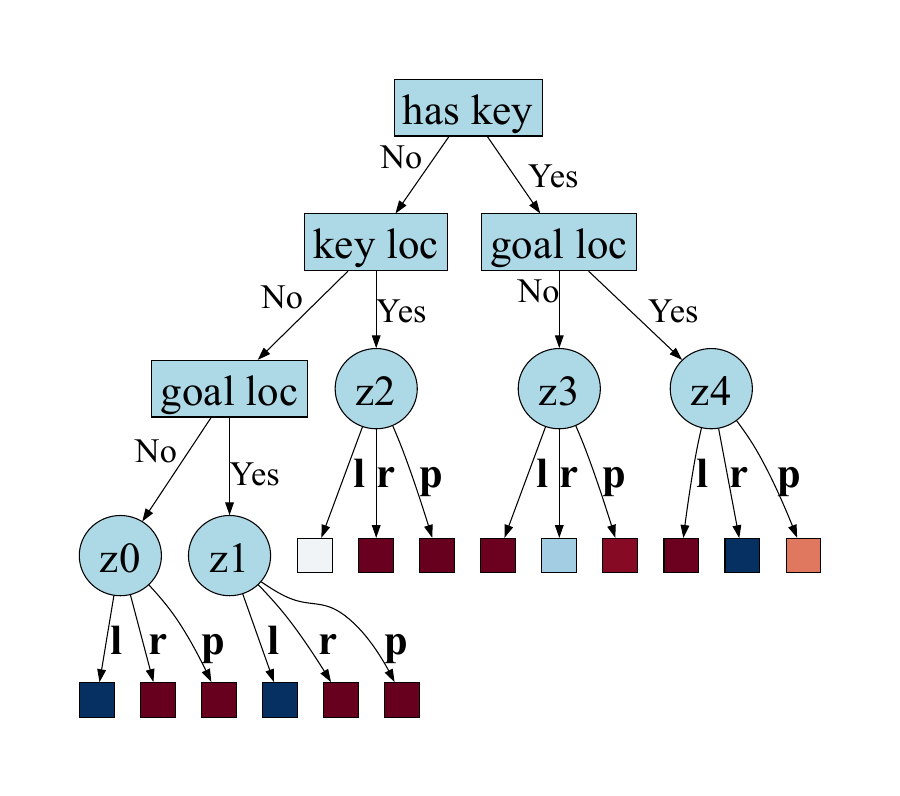}
    }
    \end{minipage}
    \begin{minipage}[b]{.111\textwidth}
    \centering
    {
        \includegraphics[height=3.2cm]{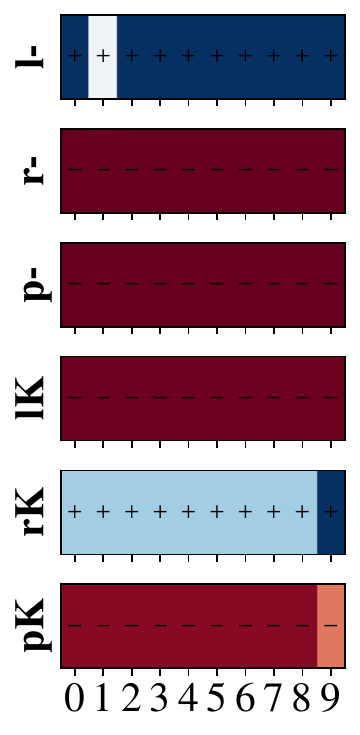}
    }
    \end{minipage}
    \begin{minipage}[b]{.215\textwidth}
    \centering
    {
        \includegraphics[trim={1.3cm 1.3cm 1.3cm 1.3cm}, clip, height=3.1cm]{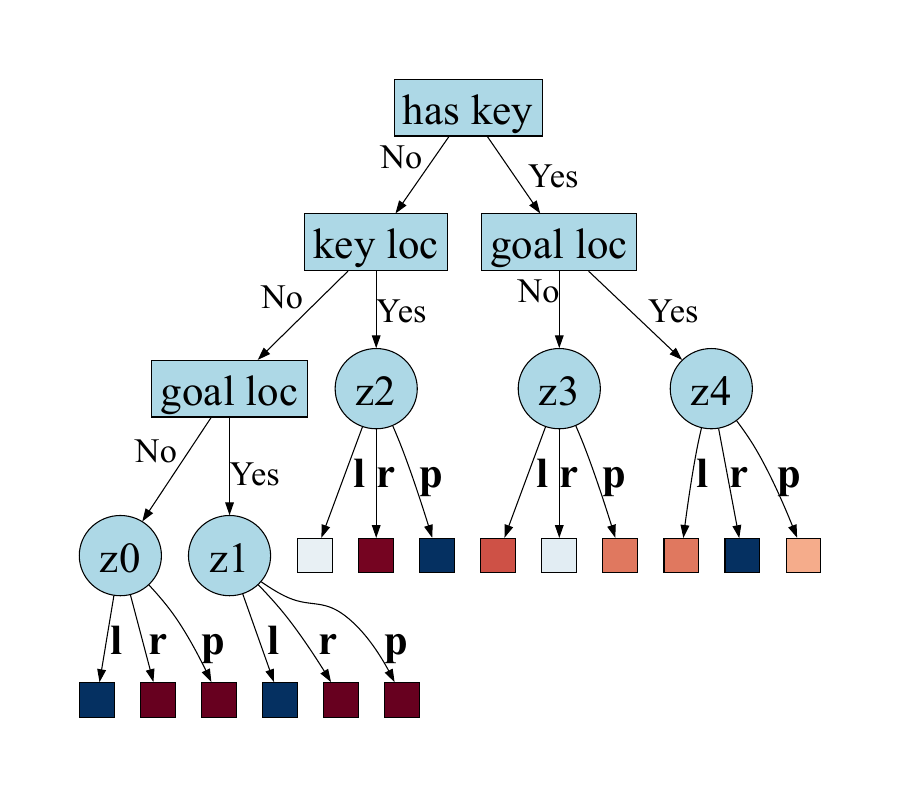}
    }
    \end{minipage}
    \begin{minipage}[b]{.111\textwidth}
    \centering
    {
        \includegraphics[height=3.2cm]{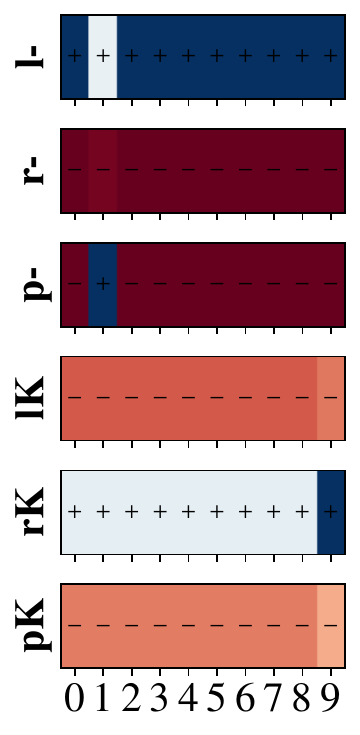}
    }
    \end{minipage}
    }
    \vspace{-4mm}     
    \caption{\envlinekey{}: ${R}^{\AlgOurs}$ for learner $3$  at $k=100, 30000, \text{and } 50000$ episodes.}
    \label{fig:designed.rewards.linekey.pool.Ada_TL.L3}
\end{subfigure}
    \vspace{-7.5mm}
    \caption{\looseness-1Visualization of reward functions designed by different techniques in the \envlinekey{} environment for all three actions $\{\textnormal{``left''}, \textnormal{``right''}, \textnormal{``pick''}\}$. \textbf{(a)} shows original reward function ${R}^{\orig}$. \textbf{(b)} shows reward function ${R}^{\Invariance}$. \textbf{(c)} shows reward function ${R}^{\EXPRD}$ designed by expert-driven non-adaptive reward design technique~\cite{devidze2021explicable}. \textbf{(d, e, f)} show reward functions ${R}^{\AlgOurs}$ designed by our framework \AlgOurs{} for three learners, each with its distinct initial policy, at different training episodes $k$. These plots illustrate reward values for all combinations of triplets: agent's location (indicated as ``key loc'', ``goal loc'' in tree plots),  agent's status whether it has acquired the key or not (indicated as ``has key'' in tree plots and letter ``K'' in bar plots), and three actions (indicated as `l' for ``left'', `r' for ``right'', `p' for ``pick''). A negative reward is shown in Red color with the sign ``-'', a positive reward is shown in Blue color with the sign ``+'', and a zero reward is shown in white. The color intensity indicates the reward magnitude.}
    \label{fig-app:designed.rewards.linekey}
    \Description{Designed rewards for the line-key environment.}
    \vspace{-2.5mm}
\end{figure*}

\looseness-1\textbf{Results.} Figure~\ref{fig:convergence.results.room} presents the results for both settings (i.e., a single learner and a diverse group of learners). The reported results are averaged over $40$ runs (where each run corresponds to designing rewards for a specific learner), and convergence plots show the mean performance with standard error bars.\footnote{We conducted the experiments on a cluster consisting of machines equipped with a 3.30 GHz Intel Xeon CPU E5-2667 v2 processor and 256 GB of RAM.} As evident from the results in Figures~\ref{fig:convergence.results.room.reinforce1}~and~\ref{fig:convergence.results.room.reinforce2}, the rewards designed by \AlgOurs{} significantly speed up the learner's convergence to optimal behavior when compared to the rewards designed by  baseline techniques. Notably, the effectiveness of \AlgOurs{} becomes more pronounced in scenarios featuring a diverse group of learners with distinct initial policies, where adaptive reward design plays a crucial role.
Figure~\ref{fig:designed.rewards.room} presents a visualization of the designed reward functions generated by different techniques at various episodes. Notably, the rewards ${R}^{\orig}$, ${R}^{\Invariance}$, and ${R}^{\EXPRD}$ are agnostic to the learner's policy and remain constant throughout the training process. In Figures~\ref{fig:designed.rewards.room.Ada_TL.L1},~\ref{fig:designed.rewards.room.Ada_TL.L2},~and~\ref{fig:designed.rewards.room.Ada_TL.L3}, we illustrate the ${R}_{k}^{\AlgOurs}$ rewards designed by our technique for three learners each with its distinct initial policy at $k=1000, 2000, 3000, 100000, \text{and } 200000$ episodes. As observed in these plots, \AlgOurs{} rapidly assigns high-magnitude numerical values to the designed rewards and adapts these rewards w.r.t. the learner's current policy. Initially (see $k=1000$ episode plots), the rewards designed by \AlgOurs{} encourage the agent to quickly reach the goal state (``green-star'') by providing positive reward signals for optimal actions (``up'', ``right'') followed by modifying reward signals in each episode to align with the learner's current policy.

\subsection{Evaluation on \envlinekey{}}
\label{sec:evaluation:envlinekey}

\looseness-1\textbf{\envlinekey{} (Figure~\ref{fig:envlinekey}).} 
This environment corresponds to a navigation task in a one-dimensional space where the agent has to first pick the key and then reach the goal. 
The environment used in our experiments is based on the work of \cite{devidze2021explicable} that also serves as a baseline technique. We represent the environment as an MDP with $\mathcal{S}$ states corresponding to nodes in a chain with the ``gray circle'' indicating the agent’s initial location. Goal (``green-star'') is available in the rightmost state, and the key is available at the state shown as
“cyan-bolt”. The agent can take three actions given by $\mathcal{A} := \{\textnormal{``left''}, \textnormal{``right''}, \textnormal{``pick''}\}$. \textnormal{``pick''} action does not change the agent's location, however, when executed in locations with the availability of the key, the agent acquires the key; if the agent already had a key, the action does not affect the status. A move action of \textnormal{``left''} or \textnormal{``right''} takes the agent from the current location to the neighboring node according to the direction of the action.
Similar to \envfourroom{}, the agent's move action is not applied if the new location crosses the wall, and there is $p_{\textnormal{rand}}$ probability of a random action. The agent gets a reward of $R_{\textnormal{max}}$ after it has navigated to the goal locations after acquiring the key and then takes a ``right'' action; note that this action also terminates the episode. The reward is $0$ elsewhere and there is a discount factor $\gamma$. We set $p_{\textnormal{rand}}=0.1$, $R_{\textnormal{max}}=10$, $\gamma=0.95$, and the environment resets after a horizon of $H=30$ steps.

\looseness-1\textbf{Reward structure.} We adopt a tree structured representation of the state space, as visually depicted in Figure~\ref{fig:envlinekey.tree.structure}. To formalize this representation, we employ a state abstraction function denoted as $\psi: \mathcal{S} \to \{0,1\}^{5}$. For each state $s \in \mathcal{S}$, the $i$-th entry of $\psi(s)$ is set to $1$ if $s$ maps to the $i$-th circled node of the tree (i.e., parent to leaf nodes), and $0$ otherwise. Then, we define the set $\mathcal{R}_\textnormal{str}$ in a manner similar to that outlined in Section~\ref{sec:evaluation:envfourroom}. Further, we define $\mathcal{R} := \mathcal{R}_\textnormal{inv} \cap \mathcal{R}_\textnormal{str}$ as discussed in Section~\ref{sec:expadard-formal}. We note that $\overline{R} \in \mathcal{R}$.

\looseness-1\textbf{Evaluation setup and techniques evaluated.} Our evaluation setup for \envlinekey{} environment is exactly the same as that used for \envfourroom{} environment (described in Section~\ref{sec:evaluation:envfourroom}). In particular, all the hyperparameters (related to the REINFORCE agent, reward design techniques, and training process) are the same as in Section~\ref{sec:evaluation:envfourroom}. 
In this evaluation, we again have two settings to evaluate the utility of adaptive reward design: (i) a single learner with a uniformly random initial policy (where each action is taken with a probability of $0.33$) and (ii) a diverse group of learners, each with distinct initial policies. To generate a collection of distinctive initial policies, we introduced modifications to a uniformly random policy. These modifications were designed to incorporate a $0.7$ probability of the agent selecting suboptimal actions from various states. In our evaluation, we included five such unique initial policies.

\looseness-1\textbf{Results.} Figure~\ref{fig:convergence.results.linekey} presents the results for both settings (i.e., a single learner and a diverse group of learners). The reported results are averaged over $30$ runs, and convergence plots show the mean performance with standard error bars. These results further demonstrate the effectiveness and robustness of \AlgOurs{} across different settings in comparison to baselines.
\looseness-1Analogous to Figure~\ref{fig:designed.rewards.room} in Section \ref{sec:evaluation:envfourroom}, Figure~\ref{fig-app:designed.rewards.linekey} presents a visualization of the designed reward functions produced by different techniques at various training episodes. These results illustrate the utility of our proposed informativeness criterion for adaptive reward design, particularly when dealing with various structural constraints to obtain interpretable rewards, including tree-structured reward functions.

\section{Concluding Discussions}
\label{sec:conclusions}
\looseness-1We studied the problem of expert-driven reward design, where an expert/teacher seeks to provide informative and interpretable rewards to a learning agent. We introduced a novel reward informativeness criterion that adapts w.r.t. the agent's current policy. Based on this informativeness criterion, we developed an expert-driven adaptive reward design framework, \AlgOurs. We empirically demonstrated the utility of our framework on two navigation tasks.

\looseness-1Next, we discuss a few limitations of our work and outline a future plan to address them. First, we conducted experiments on simpler environments to systematically investigate the effectiveness of our informativeness criterion in terms of adaptivity and structure of designed reward functions. It would be interesting to extend the evaluation of the reward design framework in more complex environments (e.g., with continuous state/action spaces) by leveraging an abstraction-based pipeline considered in~\cite{devidze2021explicable}. Second, we considered fixed structural properties to induce interpretable reward functions. It would also be interesting to investigate the usage of our informativeness criterion for automatically discovering or optimizing the structured properties (e.g., nodes in the tree structure).
Third, we empirically showed the effectiveness of our adaptive rewards, but adaptive rewards could also lead to instability in the agent's learning process. It would be useful to analyze our adaptive reward design framework in terms of an agent's convergence speed and stability.



\begin{acks}
Funded/Co-funded by the European Union (ERC, TOPS, 101039090). Views and opinions expressed are however those of the author(s) only and do not necessarily reflect those of the European Union or the European Research Council. Neither the European Union nor the granting authority can be held responsible for them.
\end{acks}


\vspace{-1.5mm}
\section*{Ethics Statement} 
This work presents a reward informativeness criterion that can be utilized in designing adaptive, informative, and interpretable rewards for a learning agent. Given the algorithmic nature of our work applied to agents, we do not foresee direct negative societal impacts of our work in the present form.

\clearpage

\balance

 
\bibliographystyle{ACM-Reference-Format} 
\bibliography{main}


\clearpage    
\onecolumn
\appendix
{
    \allowdisplaybreaks
\section{Appendix}
\label{sec-app:proofs}

In this appendix, we provide additional details and proofs.

\subsection{Additional Details}

\begin{algorithm}[h!]
    \caption{Expert-driven Explicable and Adaptive Reward Design (\AlgOurs{}): Full Implementation}
    \begin{algorithmic}[1]
        \State \textbf{Input:} \looseness-1MDP $M := \big( \mathcal{S},\mathcal{A},T,P_0,\gamma,\overline{R} \big)$, target policy $\pi^T$, RL algorithm $L$, reward constraint set $\mathcal{R}$, first-in-first-out buffer $\mathcal{D}$ with size $D_\text{max}$, reward update rate $N_r$, policy update rate $N_\pi$
        \State \textbf{Initialize:} learner's initial policy $\pi_{0}^L$ 
        \For{$k = 1,2,\dots, K$}
            \Statex \ \ \quad \textcolor{blue}{// reward update}
            \If{$k \% N_r = 0$} 
                \State \looseness-1Expert/teacher updates the reward function by solving the optimization problem in Eq.~\eqref{eq:reward-design-problem}.
            \Else 
                \State Keep previous reward $R_k \gets R_{k-1}$
            \EndIf
            \Statex \ \ \quad \textcolor{blue}{// policy update}
            \If{$k \% N_\pi = 0$} 
                \State Learner updates the policy: $\pi^L_k \gets L(\pi^L_{k-1}, R_k)$ using the latest rollouts in $\mathcal{D}$
            \Else
                \State Keep previous policy $\pi_{k}^L \gets \pi_{k-1}^L$
            \EndIf
            \Statex \ \ \quad \textcolor{blue}{// data collection}
            \State \parbox[t]{\dimexpr\linewidth-\algorithmicindent}{Rollout the policy $\pi_{k}^L$ in the MDP $M$ to obtain a trajectory $\xi^k = \brr{s_0^k, a_0^k, s_1^k, a_1^k, \dots, s_H^k}$}
            \State \parbox[t]{\dimexpr\linewidth-\algorithmicindent}{Add $\xi^k$ to the buffer $\mathcal{D}$ (the oldest trajectory gets removed when the buffer $\mathcal{D}$ is full) }
        \EndFor{}
        \State \textbf{Output:} learner's policy $\pi^L_K$
    \end{algorithmic}
    \label{alg:expert-driven-adaptive-reward-design-full}
\end{algorithm}

\paragraph{\textbf{Implementation details.}} In Algorithm~\ref{alg:expert-driven-adaptive-reward-design-full}, we present an extended version of Algorithm~\ref{alg:expert-driven-adaptive-reward-design} with full implementation details. In our experiments in the main paper (Section~\ref{sec:evaluation}), we used $N_\pi=2$ and $N_r=5$ similar to hyperparameters considered in existing works on self-supervised reward design~\cite{zheng2018learning,memarian2021self,devidze2022exploration}. Here, we update the policy more frequently than the reward for stability of the learning process. We also conducted additional experiments with values of $N_r=100$ and $N_r=1000$, while keeping $N_\pi=2$. These increased values of $N_r$ lowered the variance without affecting the overall performance. In general, there is limited theoretical understanding of the impact of adaptive rewards on learning process stability, and it would be interesting to investigate this in future work.

\paragraph{\textbf{Impact of horizon $h$ in Eq.~\eqref{eq:reward-design-problem}.}} For the lower $h$ values, the reward design process can provide stronger reward signals, thereby speeding up the learning process. The choice $h=1$ further simplifies the computation of $I_h$ as it doesn't require computing the $h$-step advantage value function. In general, when we are designing rewards for different types of learners, it could be more effective to use the informativeness criterion summed up over different values of $h$.

\paragraph{\textbf{Structural constraints in $\mathcal{R}$.}} Given a feature representation $f: \mathcal{S} \times \mathcal{A} \to \{0,1\}^d$, we employed parametric reward functions of the form $R_\phi(s,a) = \ipp{\phi}{f(s,a)}$ in our experiments in the main paper (Section~\ref{sec:evaluation}). A general methodology to create feature representations could be based on state/action abstractions. It would be interesting to automatically learn such abstractions and quantify the size of the required abstraction for a given environment. \AlgOurs{} framework can be extended to incorporate structural constraints such as those defined by a set of logical rules defined over state/action space. When the set of logical rules induces a partition over the state-action space, we can define a one-hot feature representation over this partitioned space.

\paragraph{\textbf{Computational complexity of different techniques.}} At every step $k$ of Algorithm~\ref{alg:expert-driven-adaptive-reward-design}, \AlgOurs{} requires to solve the optimization problem in Eq.~\eqref{eq:reward-design-problem}. However, since it is a linearly constrained concave-maximization problem w.r.t. $R \in \mathbb{R}^{|S|\times|A|}$, it can be efficiently solved using standard convex programs (similar to the inner problem $(P1)$ in \EXPRD{}~\cite{devidze2021explicable}). Notably, the rewards ${R}^{\orig}$, ${R}^{\Invariance}$, and ${R}^{\EXPRD}$ are agnostic to the learner's policy and remain constant throughout the training process. ${R}^{\Invariance}$ baseline technique is based on related work~\cite{DBLP:conf/icml/RakhshaRD0S20,DBLP:journals/corr/abs-2011-10824} when not considering informativeness. This baseline still requires solving a constrained optimization problem and has a similar computational complexity of ${R}^{\EXPRD}$.

\subsection{Proof of Proposition~\ref{prop:intuitive-grad}}

\begin{proof}
For the simple learning algorithm $L$, we can write the derivative of the informativeness criterion in 
Eq.~\eqref{eq:intuitive-IR-bi-level} as follows:
\begin{align}
\bss{\nabla_\phi I_L(R_\phi \mid \overline{R}, \pi^T, \pi^L)}_{\phi} 
~\stackrel{(a)}{=}~& \bss{\nabla_\phi \theta^L_\textnormal{new} (\phi) \cdot \nabla_{\theta^L_\textnormal{new} (\phi)} J(\pi_{\theta^L_\textnormal{new}(\phi)}; \overline{R}, \pi^T)}_{\phi} \nonumber \\ 
~\stackrel{(b)}{\approx}~& \underbrace{\bss{\nabla_\phi \theta^L_\textnormal{new} (\phi)}_{\phi}}_{\textcircled{1}} \cdot \underbrace{\bss{\nabla_{\theta} J(\pi_{\theta}; \overline{R}, \pi^T)}_{\theta^L}}_{\textcircled{2}} , \nonumber
\end{align}
where the equality in $(a)$ is due to chain rule, and the approximation in $(b)$ assumes a smoothness condition of $\Big\lVert \bss{\nabla_{\theta} J(\pi_{\theta}; \overline{R}, \pi^T)}_{\theta^L_\textnormal{new}(\phi)} - \bss{\nabla_{\theta} J(\pi_{\theta}; \overline{R}, \pi^T)}_{\theta^L} \Big\rVert_2 \leq c \cdot \norm{\theta^L_\textnormal{new}(\phi) - \theta^L}_2$ for some $c > 0$. For the $L$ described above, we can obtain intuitive forms of the terms \textcircled{1} and \textcircled{2}. For any $s \in \mathcal{S}, a \in \mathcal{A}$, let $\mathbf{1}_{s,a} \in \mathbb{R}^{\abs{\mathcal{S}} \cdot \abs{\mathcal{A}}}$ denote a vector with $1$ in the $(s,a)$-th entry and $0$ elsewhere. 

First, we simplify the term \textcircled{1} as follows:
\begin{align*}
\bss{\nabla_\phi \theta^L_\textnormal{new} (\phi)}_{\phi}  \stackrel{(a)}{=}& \alpha \cdot \Expectover{\mu_{s,a}^{\pi^L}}{\big[ \nabla_\phi {Q}^{\pi^L}_{R_\phi,h} (s,a) \big]_{\phi} \cdot \big[\nabla_\theta \log \pi_\theta (a|s)\big]_{\theta^L}^\top} \\
=~& \alpha \cdot \Expectover{\mu_{s}^{\pi^L}}{\sum_{a}{\pi^L(a|s) \cdot \big[ \nabla_\phi {Q}^{\pi^L}_{R_\phi,h} (s,a) \big]_{\phi} \cdot \big[\nabla_\theta \log \pi_\theta (a|s)\big]_{\theta^L}^\top}} \\
\stackrel{(b)}{=}& \alpha \cdot \Expectover{\mu_{s}^{\pi^L}}{\sum_{a}{\pi^L(a|s) \cdot \big[ \nabla_\phi {Q}^{\pi^L}_{R_\phi,h} (s,a) \big]_{\phi} \cdot \mathbf{1}_{s,a}^\top}} - \Expectover{\mu_{s}^{\pi^L}}{\sum_{a}{\pi^L(a|s) \cdot \big[ \nabla_\phi {Q}^{\pi^L}_{R_\phi,h} (s,a) \big]_{\phi} \cdot \Big( \sum_{a'}{\pi^L (a'|s) \cdot \mathbf{1}_{s,a'}^\top} \Big)}} \\
=~& \alpha \cdot \Expectover{\mu_{s}^{\pi^L}}{\sum_{a}{\pi^L(a|s) \cdot \big[ \nabla_\phi {Q}^{\pi^L}_{R_\phi,h} (s,a) \big]_{\phi} \cdot \mathbf{1}_{s,a}^\top}} - \Expectover{\mu_{s}^{\pi^L}}{\Big[ \nabla_\phi \sum_{a}{\pi^L(a|s) \cdot {Q}^{\pi^L}_{R_\phi,h} (s,a)} \Big]_{\phi} \cdot \Big( \sum_{a'}{\pi^L (a'|s) \cdot \mathbf{1}_{s,a'}^\top} \Big)} \\
=~& \alpha \cdot \Expectover{\mu_{s}^{\pi^L}}{\sum_{a}{\pi^L(a|s) \cdot \big[ \nabla_\phi {Q}^{\pi^L}_{R_\phi,h} (s,a) \big]_{\phi} \cdot \mathbf{1}_{s,a}^\top}} - \Expectover{\mu_{s}^{\pi^L}}{\big[ \nabla_\phi {V}^{\pi^L}_{R_\phi,h} (s) \big]_{\phi} \cdot \Big( \sum_{a'}{\pi^L (a'|s) \cdot \mathbf{1}_{s,a'}^\top} \Big)} \\
\stackrel{(c)}{=}& \alpha \cdot \Expectover{\mu_{s}^{\pi^L}}{\sum_{a}{\pi^L(a|s) \cdot \big[ \nabla_\phi {Q}^{\pi^L}_{R_\phi,h} (s,a) \big]_{\phi} \cdot \mathbf{1}_{s,a}^\top}} - \Expectover{\mu_{s}^{\pi^L}}{\sum_{a}{\pi^L (a|s) \cdot \big[ \nabla_\phi {V}^{\pi^L}_{R_\phi,h} (s) \big]_{\phi} \cdot \mathbf{1}_{s,a}^\top}} \\
~=~& \alpha \cdot \Expectover{\mu_{s,a}^{\pi^L}}{\big[ \nabla_\phi {Q}^{\pi^L}_{R_\phi,h} (s,a) \big]_{\phi} \cdot \mathbf{1}_{s,a}^\top - \big[ \nabla_\phi {V}^{\pi^L}_{R_\phi,h} (s) \big]_{\phi} \cdot \mathbf{1}_{s,a}^\top} \\
~=~& \alpha \cdot \Expectover{\mu_{s,a}^{\pi^L}}{\bss{\nabla_\phi {{A}^{\pi^L}_{R_\phi,h} (s,a)}}_{\phi} \cdot \mathbf{1}_{s,a}^\top} .
\end{align*}
The equality in $(a)$ arises from the meta-gradient derivations presented in~\cite{andrychowicz2016learning,santoro2016meta,nichol2018first}. In $(b)$, the equality is a consequence of the relationship $\big[\nabla_\theta \log \pi_\theta (a|s)\big]_{\theta^L} = \Big( \mathbf{1}_{s,a} - \sum_{a'}{\pi^L (a'|s) \cdot \mathbf{1}_{s,a'}} \Big)$. Finally, in $(c)$, the equality can be attributed to the change of variable from $a'$ to $a$. Then, by applying analogous reasoning to the above discussion, we simplify the term \textcircled{2} as follows:
\begin{align*}
\bss{\nabla_{\theta} J(\pi_{\theta}; \overline{R}, \pi^T)}_{\theta^L} 
~=~& \Expectover{\mu_{s}^{\pi^T}}{\sum_a {A}^{\pi^T}_{\overline{R}}(s,a) \cdot \big[\nabla_\theta \pi_\theta (a|s)\big]_{\theta^L}} \\
~=~& \Expectover{\mu_{s}^{\pi^T}}{\sum_{a}\pi^L(a|s) \cdot {A}^{\pi^T}_{\overline{R}}(s,a) \cdot \big[\nabla_\theta \log \pi_\theta (a|s)\big]_{\theta^L}} \\
~=~& \Expectover{\mu_{s}^{\pi^T}}{\sum_{a}{\pi^L(a|s) \cdot {A}^{\pi^T}_{\overline{R}}(s,a) \cdot \mathbf{1}_{s,a}}} - \Expectover{\mu_{s}^{\pi^T}}{\sum_{a}{\pi^L(a|s) \cdot {A}^{\pi^T}_{\overline{R}}(s,a) \cdot \Big( \sum_{a'}{\pi^L (a'|s) \cdot \mathbf{1}_{s,a'}} \Big)}} \\
~=~& \Expectover{\mu_{s}^{\pi^T}}{\sum_{a}{\pi^L(a|s) \cdot {A}^{\pi^T}_{\overline{R}}(s,a) \cdot \mathbf{1}_{s,a}}} - \Expectover{\mu_{s}^{\pi^T}}{{A}^{\pi^T}_{\overline{R}}(s, \pi^L(s)) \cdot \Big( \sum_{a'}{\pi^L (a'|s) \cdot \mathbf{1}_{s,a'}} \Big)} \\
~=~& \Expectover{\mu_{s}^{\pi^T}}{\sum_{a}{\pi^L(a|s) \cdot {A}^{\pi^T}_{\overline{R}}(s,a) \cdot \mathbf{1}_{s,a}}} - \Expectover{\mu_{s}^{\pi^T}}{\sum_{a}{\pi^L (a|s) \cdot {A}^{\pi^T}_{\overline{R}}(s, \pi^L(s)) \cdot \mathbf{1}_{s,a}}} \\
~=~& \Expectover{\mu_{s}^{\pi^T}}{\sum_{a}{\pi^L(a|s) \cdot \brr{{A}^{\pi^T}_{\overline{R}}(s,a) - {A}^{\pi^T}_{\overline{R}}(s, \pi^L(s))} \cdot \mathbf{1}_{s,a}}} .
\end{align*}
Finally, by taking the matrix product of the terms \textcircled{1} and \textcircled{2}, we have the following:
\begin{align*}
\bss{\nabla_\phi \theta^L_\textnormal{new} (\phi)}_{\phi} \cdot \bss{\nabla_{\theta} J(\pi_{\theta}; \overline{R}, \pi^T)}_{\theta^L} & \\
~=~& \alpha \cdot \Big( \sum_{s',a'}{\mu_{s',a'}^{\pi^L} \cdot \bss{\nabla_\phi {A}^{\pi^L}_{R_\phi,h} (s',a')}_{\phi} \cdot \mathbf{1}_{s',a'}^\top} \Big) \cdot \Big( \sum_{s,a}{\mu_{s}^{\pi^T} \cdot \pi^L(a|s) \cdot \brr{{A}^{\pi^T}_{\overline{R}}(s,a) - {A}^{\pi^T}_{\overline{R}}(s, \pi^L(s))} \cdot \mathbf{1}_{s,a}} \Big) \\
~=~& \alpha \cdot \sum_{s,a} \mu_{s}^{\pi^T} \cdot \pi^L(a|s) \cdot \brr{{A}^{\pi^T}_{\overline{R}}(s,a) - {A}^{\pi^T}_{\overline{R}}(s, \pi^L(s))} \cdot \Big( \sum_{s',a'}{\mu_{s',a'}^{\pi^L} \cdot \bss{\nabla_\phi {A}^{\pi^L}_{R_\phi,h} (s',a')}_{\phi} \cdot \mathbf{1}_{s',a'}^\top} \Big) \cdot \mathbf{1}_{s,a} \\
~=~& \alpha \cdot \sum_{s,a}{\mu_{s}^{\pi^T} \cdot \pi^L(a|s) \cdot \brr{{A}^{\pi^T}_{\overline{R}}(s,a) - {A}^{\pi^T}_{\overline{R}}(s, \pi^L(s))} \cdot \mu_{s,a}^{\pi^L} \cdot \bss{\nabla_\phi {A}^{\pi^L}_{R_\phi,h} (s,a)}_{\phi}} \\
~=~& \alpha \cdot \Expectover{\mu_{s,a}^{\pi^L}}{\mu_{s}^{\pi^T} \cdot \pi^L(a|s) \cdot \brr{{A}^{\pi^T}_{\overline{R}}(s,a) - {A}^{\pi^T}_{\overline{R}}(s, \pi^L(s))} \cdot \bss{\nabla_\phi {A}^{\pi^L}_{R_\phi,h} (s,a)}_{\phi}} ,
\end{align*}
which completes the proof.
\end{proof}

\subsection{Proof of Theorem~\ref{thm:shaping-complexity}}

\begin{proof}
For any fixed policy $\pi^L$, consider the following reward design problem:
\[
\max_{R \in \mathcal{R}} I_{h=1}(R \mid \overline{R}, \pi^T, \pi^L) ,
\]
where $\mathcal{R} = \bcc{R: \abs{R\brr{s,a}} \leq R_\mathrm{max}, \forall s \in \mathcal{S}, a \in \mathcal{A}}$. Since $I_{h=1}(R \mid \overline{R}, \pi^T, \pi^L) = \sum_s \mu^{\pi^T}_s \cdot \mu^{\pi^L}_s \cdot \sum_a \bcc{\pi^L(a|s)}^2 \cdot \brr{A^{\pi^T}_{\overline{R}}(s,a) - \Expectover{\pi^L(b|s)}{A^{\pi^T}_{\overline{R}}(s,b)}} \cdot \brr{R (s,a) - \Expectover{\pi^L(b|s)}{R (s,b)}}$, reward values for each state $s$ can be independently optimized. Thus, for each state $s \in \mathcal{S}$, we solve the following problem independently to find optimal values for $R(s, a), \forall a$:
\[
\max_{R \in \mathcal{R}} \sum_a \bcc{\pi^L(a|s)}^2 \cdot \brr{A^{\pi^T}_{\overline{R}}(s,a) - \Expectover{\pi^L(b|s)}{A^{\pi^T}_{\overline{R}}(s,b)}} \cdot \brr{R (s,a) - \Expectover{\pi^L(b|s)}{R (s,b)}} . 
\]
The above problem can be further simplified by gathering the terms involving $R(s,a)$: 
\[
\max_{R(s,a): \abs{R(s,a)} \leq R_\mathrm{max}} \pi^L(a|s) \cdot Z (s,a) \cdot R (s,a) ,
\]
where 
\begin{align*}
Z (s,a) ~=~& \pi^L(a|s) \cdot \brr{A^{\pi^T}_{\overline{R}}(s,a) - \Expectover{\pi^L(b|s)}{A^{\pi^T}_{\overline{R}}(s,b)}} - \sum_{a'} \bcc{\pi^L(a'|s)}^2 \cdot \brr{A^{\pi^T}_{\overline{R}}(s,a') - \Expectover{\pi^L(b|s)}{A^{\pi^T}_{\overline{R}}(s,b)}} .
\end{align*}
Then, we have the following solution for the reward design problem: 
\[
R(s,a) ~=~ \begin{cases}
+ R_\mathrm{max}, & \text{if } Z(s,a) \geq 0 \\
- R_\mathrm{max}, & \text{otherwise} .
\end{cases}
\]
We conduct the following "worst-case" analysis for Algorithm~\ref{alg:expert-driven-adaptive-reward-design}. For any state $s \in \mathcal{S}$:
\begin{enumerate}
\item reward update at step $k=1$: for the randomly initialized policy $\pi^L_0(a|s) = 1/\abs{\mathcal{A}}, \forall a \in \mathcal{A}$, for the highly sub-optimal action $a_1$ w.r.t. $\pi^T$, we have $Z(s, a_1) = \frac{1}{\abs{\mathcal{A}}} \cdot \brr{A^{\pi^T}_{\overline{R}}(s,a_1) - \Expectover{\pi^L_0(b|s)}{A^{\pi^T}_{\overline{R}}(s,b)}} - \sum_{a'} \frac{1}{\abs{\mathcal{A}}^2} \cdot \brr{A^{\pi^T}_{\overline{R}}(s,a') - \Expectover{\pi^L_0(b|s)}{A^{\pi^T}_{\overline{R}}(s,b)}} = \frac{1}{\abs{\mathcal{A}}} \cdot \brr{A^{\pi^T}_{\overline{R}}(s,a_1) - \Expectover{\pi^L_0(b|s)}{A^{\pi^T}_{\overline{R}}(s,b)}} < 0$. Then, the updated reward will be $R_1(s, a_1) = - R_\mathrm{max}$ and $R_1(s, a) = + R_\mathrm{max}, \forall a \in \mathcal{A} \backslash \bcc{a_1}$. 
\item policy update at step $k=1$: for the updated reward function $R_1$, the policy $\pi^L_1 (s) \gets \argmax_a R_1 (s,a)$ is given by $\pi^L_1(a_1|s) = 0$ and $\pi^L_1(a|s) = 1/(\abs{\mathcal{A}} - 1), \forall a \in \mathcal{A} \backslash \bcc{a_1}$. 
\item reward update at step $k=2$: for the updated policy $\pi^L_1$, for the second highly sub-optimal action $a_2$ w.r.t. $\pi^T$ also, we have $Z(s, a_2) = \frac{1}{(\abs{\mathcal{A}} - 1)} \cdot \brr{A^{\pi^T}_{\overline{R}}(s,a_2) - \Expectover{\pi^L_1(b|s)}{A^{\pi^T}_{\overline{R}}(s,b)}} - \sum_{a' \in \mathcal{A} \backslash \bcc{a_1}} \frac{1}{(\abs{\mathcal{A}} - 1)^2} \cdot \brr{A^{\pi^T}_{\overline{R}}(s,a') - \Expectover{\pi^L_1(b|s)}{A^{\pi^T}_{\overline{R}}(s,b)}} = \frac{1}{(\abs{\mathcal{A}} - 1)} \cdot \brr{A^{\pi^T}_{\overline{R}}(s,a_2) - \Expectover{\pi^L_1(b|s)}{A^{\pi^T}_{\overline{R}}(s,b)}} < 0$. Then, the updated reward will be $R_2(s, a_1) = R_2(s, a_2) = - R_\mathrm{max}$ and $R_2(s, a) = + R_\mathrm{max}, \forall a \in \mathcal{A} \backslash \bcc{a_1, a_2}$.
\item policy update at step $k=2$: for the updated reward function $R_2$, the policy $\pi^L_2 (s) \gets \argmax_a R_2 (s,a)$ is given by $\pi^L_2(a_1|s) = \pi^L_2(a_2|s) = 0$ and $\pi^L_2(a|s) = 1/(\abs{\mathcal{A}} - 2), \forall a \in \mathcal{A} \backslash \bcc{a_1, a_2}$.
\end{enumerate}
By continuing the above argument for $\abs{A}$ steps, we can show that $\pi^L_k$ converges to the target policy $\pi^T$, which completes the proof.   
\end{proof}

}

\end{document}